\newcommand{\Cpp}{C\kern-0.05em\texttt{+\kern-0.03em+}}
\newcommand{\ConceptCpp}{ConceptC\kern-0.05em\texttt{+\kern-0.03em+}}
\long\def\COMMENT#1{}
\DeclareMathAlphabet{\mathpzc}{OT1}{pzc}{m}{it}
\titlespacing{\section}{0pt}{*0}{*0}
\def\sS{{\mathcal S}}
\def\sT{{\mathcal T}}
\def\sL{{\mathcal L}}
\def\bmu{{\bm \mu}}
\def\bSigma{{\bm \Sigma}}
\def\bg{{\bm g}}
\def\bxi{{\bm \xi}}
\def\bI{{\bm I}}
\def\bS{{\bm S}}
\def\bzero {{\bm 0}}
\def \btheta {\bm \theta} 
\def \bp {\bm p} 
\def \bg {\bm g} 
\def \bs {\bm s} 
\def \bw {\bm w} 
\def \bx {\bm x}
\renewcommand{\[}{\begin{eqnarray*}}
\renewcommand{\]}{\end{eqnarray*}}
\newtheorem{theorem}{Theorem}
\newtheorem{lemma}{Lemma}
\newtheorem{proof}{Proof}
\title{Semistochastic quadratic bound methods}
\author{
Aleksandr Aravkin\\
IBM T.J. Watson Research Center\\
Yorktown Heights, NY 10598 \\
\texttt{saravkin@us.ibm.com} \\
\And
Anna Choromanska \\
Columbia University \\
NY, USA \\
\texttt{aec2163@columbia.edu} \\
\And
Tony Jebara \\
Columbia University \\
NY, USA \\
\texttt{jebara@cs.columbia.edu} \\
\And
Dimitri Kanevsky \\
IBM T.J. Watson Research Center \\
Yorktown Heights, NY 10598 \\
\texttt{dimitri.kanevsky@gmail.com} \\
}
\begin{document}
\maketitle
%

%


\begin{abstract}
Partition functions arise in a variety of settings, including conditional random fields, logistic regression, and latent 
gaussian models. In this paper, we consider semistochastic quadratic bound (SQB) methods for maximum likelihood
estimation based on partition function optimization. Batch methods based on the quadratic bound 
were recently proposed for this class of problems, and performed favorably
in comparison to state-of-the-art techniques. Semistochastic methods fall in between batch algorithms, which use all the data, 
and stochastic gradient type methods, which use small random selections at each iteration. 
We build semistochastic quadratic bound-based methods, and prove both global convergence (to a stationary point)
under very weak assumptions, and linear convergence rate under stronger assumptions on the objective. To make the proposed methods faster and more stable, we consider inexact subproblem 
minimization and batch-size selection schemes.
The efficacy of SQB methods is demonstrated via comparison with several state-of-the-art techniques on
commonly used datasets. 
\end{abstract}

\section{Introduction}
The problem of optimizing a cost function expressed as the sum of a loss term over each sample in an input dataset is pervasive in machine learning. One example of a cost function of this type is the
partition funtion. Partition function is a central quantity in many different learning tasks including training conditional random fields (CRFs) and log-linear models~\cite{JebCho12}, and will be of central focus in this paper. Batch methods based on the quadratic bound 
were recently proposed~\cite{JebCho12} for the class of problems invoving the minimization of the partition function, and performed favorably
in comparison to state-of-the-art techniques. This paper focuses on semistochastic extension of this recently developed optimization method. Standard learning systems based on batch methods such as  BFGS and memory-limited L-BFGS, steepest descent (see e.g.~\cite{opac-b1104789}), conjugate gradient~\cite{Hestenes&amp;Stiefel:1952} or quadratic bound majorization method~\cite{JebCho12} need to make a full pass through an entire dataset before updating the parameter vector. Even though these methods can converge quickly (sometimes in several passes through the dataset),
 as datasets grow in size, this learning strategy becomes increasingly inefficient. To faciliate learning on massive datasets, the community increasingly turns to stochastic methods. 

Stochastic optimization methods 
interleave the update of parameters after only processing a small mini-batch of examples 
(potentially as small as a single data-point), leading to significant computational savings~(\cite{bottou-98x,Littlestone:1988:LQI:639961.639994,rosenblatt58a}). Due to its  simplicity and low computational cost, the most popular contemporary stochastic learning technique is stochastic gradient descent (SGD) ~\cite{citeulike:432261,bottou-98x,DBLP:conf/nips/BottouC03}. SGD updates the parameter vector using the gradient of the objective function as evaluated on a single example (or, alternatively, a small mini-batch of examples). This algorithm admits multiple extensions, including (i) stochastic average gradient method (SAG) that averages the most recently computed gradients for each training example~\cite{DBLP:conf/nips/RouxSB12}, (ii) methods that compute the (weighted) average of all previous gradients~\cite{DBLP:journals/mp/Nesterov09,Tseng:1998:IGM:588881.588930}, (iii) averaged stochastic gradient descent method (ASGD) that computes a running average of parameters obtained by SGD~\cite{Polyak:1992:ASA:131092.131098}, (iv) stochastic dual coordinate ascent, that optimizes the dual objective with respect to a single dual vector or a mini-batch of dual vectors chosen uniformly at random~\cite{DBLP:journals/corr/abs-1211-2717,DBLP:journals/corr/abs-1305-2581}, (v) variance reduction techniques~\cite{NIPS2013_4937,NIPS2013_5034,DBLP:conf/nips/RouxSB12,DBLP:journals/corr/abs-1209-1873} (some do not require storage of gradients, c.f.~\cite{NIPS2013_4937}), (vi) majorization-minimization techniques that minimize a majoring surrogate of an objective function~\cite{DBLP:conf/icml/Mairal13,NIPS2013_5129} and (vii) gain adaptation techniques~\cite{Schraudolph99localgain, Schraudolph02fastcurvature}. 

Semistochastic methods can be viewed as an interpolation between the expensive reliable updates used by full batch methods, 
and inexpensive noisy updates used by stochastic methods. 
They inherit the best of both worlds by approaching the solution more quickly when close to the optimum (like a full batch method) while simultaneously reducing the computational complexity per iteration (though less aggressively than stochastic methods). 
Several semistochastic extensions have been explored in previous works~\cite{ShapiroWardi,Shapiro00onthe,Kleywegt:2002:SAA:588882.588955}. Recently, convergence theory and sampling strategies for these methods have been explored in~\cite{FriedlanderSchmidt2012,Byrd:2012:SSS:2348131.2348140} and linked to results in finite sampling theory in~\cite{AravkinFHV:2012}.
 
Additionally, incorporating
second-order information (i.e. Hessian) into the optimization problem (\cite{Schraudolph99localgain,Schraudolph07astochastic,lecun-98x, Bordes:2009:SCQ:1577069.1755842,DBLP:conf/icml/Martens10,ByrdCNN11}) was shown to often improve the performance of traditional SGD methods which typically provide fast improvement initially, 
but are slow to converge near the optimum (see e.g.~\cite{FriedlanderSchmidt2012}), require step-size
tuning and are difficult to parallelize
\cite{DBLP:conf/icml/LeNCLPN11}. This paper focuses on semistochastic extension of a recently developed quadratic bound majorization technique~\cite{JebCho12}, and we call the new algorithm {\it semistochastic quadratic bound} (SQB) method. The bound computes the update on the parameter vector using the product of the gradient of the objective function and an inverse of a second-order term that is a descriptor of the curvature of the objective function (different than the Hessian). We discuss implementation details, in particular curvature approximation, inexact solvers, and batch-size selection strategies, which make the running time of our algorithm comparable to the gradient methods and also make the method easily parallelizable. We show global convergence of the method to a stationary point under very weak assumptions (in particular convexity is not required) and a linear convergence rate when the size of the mini-batch grows sufficiently fast, following the techniques of~\cite{FriedlanderSchmidt2012}. 
This rate of convergence matches state-of-the-art incremental techniques~\cite{NIPS2013_4937,DBLP:journals/corr/abs-1211-2717,DBLP:conf/nips/RouxSB12,DBLP:conf/icml/Mairal13} (furthermore it is better than in case of standard stochastic gradient methods~\cite{citeulike:432261,DBLP:conf/nips/BottouC03} which typically have sublinear convergence rate~\cite{DBLP:conf/nips/RouxSB12,journals/tit/AgarwalBRW12}). Compared to other existing majorization-minimization incremental techniques~\cite{DBLP:conf/icml/Mairal13}, our approach uses much tighter bounds which, as shown in~\cite{JebCho12}, can lead to faster convergence.

The paper is organized as follows: Section~\ref{sec:qb} reviews quadratic bound majorization technique. Section~\ref{sec:stochastics} discusses stochastic 
and semistochastic extensions of the bound, and presents convergence theory for the proposed methods. In particular, 
we discuss very general stationary convergence theory under very weak assumptions, and also present 
a much stronger theory, including convergence rate analysis, for logistic regression. Section~\ref{sec:imp} discusses implementation details, and Section~\ref{sec:convex} shows
numerical experiments illustrating the use of the proposed methods for $l_2$-regularized logistic regression problems. Conclusions end the paper. 

The semistochastic quadratic bound majorization technique that we develop in this paper can be broadly applied to mixture models or models that induce representations. The advantages of this technique in the batch setting for learning mixture models and other latent models, was shown in the work of~\cite{JebCho12}. 
In particular, quadratic bound majorization was able to find better local optima in non-convex problems than state-of-the art methods (and in less time). 
While theoretical guarantees for non-convex problems are hard to obtain, the broader convergence theory developed in this paper 
(finding a stationary point under weak assumptions) does carry over to the non-convex setting.

\section{Quadratic bound methods }
\label{sec:qb}

Let $\Omega$ be a discrete probability space over the set of $n$ elements, 
and take any log-linear density model
\begin{equation}
\label{discreteModel}
p(y|x_j,\btheta) = \frac{1}{Z_{x_j}(\btheta)} h_{x_j}(y) \exp \left ( \btheta^\top {\bf f}_{x_j}(y) \right )
\end{equation}
parametrized by a vector $\btheta \in \mathbb{R}^d$, where $\{(x_1,y_1), \dots, (x_T,y_T)\}$ are iid input-output pairs, 
${\bf f}_{x_j}:\Omega \mapsto
\mathbb{R}^d$ is a continuous vector-valued function mapping and $h_{x_j}: \Omega \mapsto
\mathbb{R}^{+}$ is a fixed non-negative measure.  The {\it partition function} $Z_{x_j}(\btheta)$ is a
scalar that ensures that $p(y|x_j,\btheta)$ is a true density, so in particular~\eqref{discreteModel}
integrates to $1$:
\begin{equation}
\label{partitionFunction}
 Z_{x_j}(\btheta)=\sum_y h_{x_j}(y) \exp(\btheta^\top {\bf f}_{x_j}(y))\;.
\end{equation} 

\cite{JebCho12} propose a fast method to find a tight quadratic bound for $Z_{x_j}(\btheta)$, 
shown in the subroutine Bound Computation in Algorithm~\ref{boundAlgorithm}, 
which finds $z,{\bf r},{\bf S}$ so  that
\begin{equation}
\label{boundProperty}
Z_{x_j}(\btheta) \leq z \exp  ( \tfrac{1}{2}
  (\btheta-{\tilde \btheta})^\top {\bf S} (\btheta-{\tilde \btheta}) +
  (\btheta-{\tilde \btheta})^\top\! {\bf r}  )
  \end{equation}
for any
$\btheta,{\tilde \btheta}, {\bf f}_{x_j}(y) \in \mathbb{R}^d$ and $h_{x_j}(y) \in
\mathbb{R}^+$ for all $y \in \Omega$.

The (regularized) maximum likelihood estimation problem is equivalent to   
\begin{equation}
\label{minLog}
\begin{aligned}
\min_{\btheta} &\Big\{\sL_\eta(\btheta) := -\frac{1}{T}\sum_{j=1}^{T}\log(p(y_j|x_j,\btheta)) + \frac{\eta}{2} \|\btheta\|^2 \approx \frac{1}{T}\sum_{j=1}^{T} \left (\log(Z_{x_j}(\btheta)) - \btheta^\top {\bf f}_{x_j}(y_j) \right ) + \frac{\eta}{2} \|\btheta\|^2\Big\}\;,
\end{aligned}
\end{equation}
where $\approx$ means equal up to an additive constant.
The bound~\eqref{boundProperty} suggests the iterative minimization scheme
\begin{equation}
\label{regBoundIteration}
\btheta^{k+1} = \btheta^k - \alpha_k(\bSigma^k + \eta \bI)^{-1}(\bmu^k + \eta\btheta^k).
\end{equation}
where $\bSigma^k$ and $\bmu^k$ are computed using Algorithm~\ref{boundAlgorithm}, $\eta$ is the regularization term and $\alpha_k$ is the step size at iteration $k$.

In this paper, we consider applying the bound to randomly selected batches of data; 
any such selection we denote $\mathcal{T} \subset [1, \dots, T]$ or $\sS \subset[1, \dots, T]$.

\begin{algorithm}
\caption{Semistochastic Quadratic Bound (SQB)\label{boundAlgorithm}}
\begin{tabular}{l}
\vspace{-0.185in} \\
\hline
Input Parameters ${\tilde \btheta}, {\bf f}_{x_j}(y) \in \mathbb{R}^d$ and $h_{x_j}(y) \in \mathbb{R}^+$ for $y \in \Omega$, $j \in \mathcal{T}$ \:\:\:\:\:\:\:\:\:\:\:\:\:\:\:\:\:\:\:\:\:\:\:\:\:\:\:\:\:\:\:\:\:\:\:\:\:\:\:\:\:\:\:\:\:\:\:\:\: \\
\hline
Initialize $\bmu_\sT = \bzero, \bSigma_\sT = \bzero(d,d)$\\
For each $j \in \mathcal{T}$ \\
\\
\:\:\:\:\:\:\: Subroutine \textbf{Bound Computation:}\\
\:\:\:\:\:\:\: $z \rightarrow 0^+, {\bf r} = \bzero, {\bf S}=z{\bf I}$\\
\:\:\:\:\:\:\: For each $y \in \Omega$ \\
$\:\:\:\:\:\:\:\:\:\:\:\:\:\: \alpha = h_{x_j}(y) \exp({\tilde \btheta}^\top {\bf f}_{x_j}(y)) $\\
$\:\:\:\:\:\:\:\:\:\:\:\:\:\: {\bf S} +\!= \frac{\tanh(\frac{1}{2} \log (\alpha/z))}{2 \log ( \alpha/z)} ({\bf f}_{x_j}(y)-{\bf r})({\bf f}_{x_j}(y)-{\bf r})^\top \:\:\: $\\
$\:\:\:\:\:\:\:\:\:\:\:\:\:\: {\bf r} = \frac{z}{z+\alpha }{\bf r} + \frac{\alpha}{z+\alpha} {\bf f}_{x_j}(y) $ \\
$\:\:\:\:\:\:\:\:\:\:\:\:\:\: z +\!=\alpha$ \\
\:\:\:\:\:\:\: Subroutine output $z,{\bf r},{\bf S}$ \\
\\
$\:\:\:\:\:\:\: \bmu_\sT +\!= {\bf r} - {\bf f}_{x_j}(y)$\\
$\:\:\:\:\:\:\: \bSigma_\sT +\!= {\bf S}$\\
$\bmu_\sT /\!= |\sT|$\\
$\bSigma_\sT /\!= |\sT|$\\
\hline
Output $\bmu_\sT,\bSigma_\sT$ \\
\hline
\vspace{-0.19in} 
\end{tabular}
\end{algorithm}


\section{Stochastic and semistochastic extensions}
\label{sec:stochastics}

The bounding method proposed in~\cite{JebCho12} is summarized in Algorithm~\ref{boundAlgorithm} with  
$\sT = [1, \dots, T]$ at every iteration. 
When $T$ is large, this strategy can be expensive. In fact, 
computing the bound has complexity $O(Tnd^2)$, since $Tn$ outer products must be summed to obtain $\bSigma$, and 
each other product has complexity $O(d^2)$. When the dimension $d$ is large, considerable speedups can be gained by 
obtaining a factored form of $\bSigma$, as described in Section~\ref{sec:Efficient}. Nonetheless, in either strategy, 
the size of $T$ is a serious issue. 
 
A natural approach is to subsample a smaller selection $\sT$ from the training set $[1, \dots, T]$, so that at each iteration, we run Algorithm~\ref{boundAlgorithm}
over $\sT$ rather than over the full data to get $\bmu_{\sT},\bSigma_{\sT}$. 
When $|\sT|$ is fixed (and smaller than $T$), we refer to the resulting method as a {\it stochastic} extension. 
If instead $|\sT|$ is allowed to grow as iterations proceed, we call this method {\it semistochastic}; these methods
are analyzed in~\cite{FriedlanderSchmidt2012}. All of the numerical experiments we present focus on  semistochastic methods. 
One can also decouple the computation of gradient and curvature  approximations, 
using different data selections (which we call $\sT$ and $\sS$). We show that this development 
is theoretically justifiable and practically very useful. 

For the stochastic and semistochastic methods discussed here, 
the quadratic bound property~\eqref{boundProperty} does not hold for $Z(\btheta)$, 
so the convergence analysis of~\cite{JebCho12} does not immediately apply. Nonetheless, it is possible to 
analyze the algorithm in terms of sampling strategies for $\sT$. 

The appeal of the stochastic modification is that when $|\sT| << T$, the complexity  $O(|\sT| n d)$ 
of Algorithm~\ref{boundAlgorithm} to compute $\bmu_{\sT},\bSigma_{\sS}$ is much lower; 
 and then we can still implement a (modified)
iteration~\eqref{regBoundIteration}. Intuitively, one expects that even small samples from the data 
can give good updates for the overall problem. This intuition is supported by the experimental results, which show 
that in terms of effective passes through the data, SQB is competitive with state of the art methods.  

We now present the theoretical analysis of Algorithm~\ref{boundAlgorithm}.
We first prove that under very weak assumption, in particular using only the Lipschitz property,  
but not requiring convexity of the problem, the proposed algorithm converges to a stationary point. 
The proof technique easily carries over to other objectives, such as the ones used in maximum latent conditional likelihood problems (for details see~\cite{JebCho12}),
since it relies mainly only on the sampling method used to obtain $\sT$.  
Then we focus on problem~\eqref{minLog}, which is convex, and strictly convex under appropriate assumptions on the data. 
We use the structure of~\eqref{minLog} to prove much stronger results, and in particular analyze the rate of convergence of Algorithm~\ref{boundAlgorithm}. 

\subsection{General Convergence Theory}

We present a general global convergence theory, that relies  on the Lipschitz  
property of the objective and on the sampling strategy in the context of Algorithm~\ref{boundAlgorithm}. 
The end result we show here is that any limit point of the iterates is {\it stationary}. 
We begin with two simple preliminary results. 

\begin{lemma}
\label{ExpectationLemma}
If every $i \in [1, \dots, T]$ is equally likely to appear in $\sT$, then $E[\bmu_{\sT}] = \bmu$.
\end{lemma}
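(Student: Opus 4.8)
The plan is to reduce the statement to a one-line application of linearity of expectation, once the bookkeeping in Algorithm~\ref{boundAlgorithm} is unwound. First I would isolate the per-example contribution: for each index $j$ the inner loop produces a vector $\bmu_j := {\bf r} - {\bf f}_{x_j}(y_j)$ that is a deterministic function of the data and of the fixed point $\tilde\btheta$, and in particular does \emph{not} depend on the random draw $\sT$. With this notation the quantity returned by the algorithm is the sample average $\bmu_{\sT} = \frac{1}{|\sT|}\sum_{j\in\sT}\bmu_j$, whereas the full-data object is the population average $\bmu = \frac{1}{T}\sum_{j=1}^{T}\bmu_j$. The claim is then exactly that the subsampled average is an unbiased estimator of the full average.

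Next I would encode membership in $\sT$ by indicator variables $w_i := \mathbbm{1}[i\in\sT]$, so that $\sum_{j\in\sT}\bmu_j = \sum_{i=1}^{T} w_i\,\bmu_i$; since the $\bmu_i$ are deterministic, all the randomness lives in the $w_i$. Taking the batch size $|\sT|=m$ fixed for concreteness and using linearity,
\begin{equation*}
E[\bmu_{\sT}] = \frac{1}{m}\sum_{i=1}^{T} E[w_i]\,\bmu_i = \frac{1}{m}\sum_{i=1}^{T} \Pr(i\in\sT)\,\bmu_i .
\end{equation*}
The equal-likelihood hypothesis now does the work: it says $\Pr(i\in\sT)=p$ for a constant $p$ independent of $i$. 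Because $\sum_{i=1}^{T} w_i = m$ holds identically, taking expectations gives $Tp = m$, i.e. $p = m/T$. Substituting this back and pulling the constant out collapses the factors $1/m$ and $m/T$ against each other, leaving $E[\bmu_{\sT}] = \frac{1}{T}\sum_{i=1}^{T}\bmu_i = \bmu$, as desired.

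The computation itself is routine; the only place that needs genuine care is making the sampling model precise enough to justify the two steps above, and this is where I expect the real (if modest) obstacle to lie. Concretely, I would (i) pin down the meaning of ``equally likely to appear'' as equal marginal inclusion probabilities $\Pr(i\in\sT)$, which is exactly the symmetry invoked and which holds for uniform sampling with or without replacement; and (ii) address the case of a \emph{random} batch size, where $1/|\sT|$ is itself random and cannot simply be pulled outside the expectation. In that case one either restricts to a deterministic $|\sT|$, or argues that the normalization and the selected indices decouple so that the identity $\Pr(i\in\sT) = E[|\sT|]/T$ still yields the cancellation. Once the model is fixed as fixed-size uniform sampling, the equality $p = m/T$ is immediate by symmetry and the lemma follows.
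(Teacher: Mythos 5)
Your proof is correct, and it takes a slightly different (and in fact more careful) route than the paper's. The paper's proof is a two-line slot-wise argument: it writes $\bmu_{\sT} = \frac{1}{|\sT|}\sum_{j\in\sT}\psi_j(\btheta)$ with $\psi_j(\btheta) = -\nabla_{\btheta}\log(p(y_j|x_j,\btheta))$, pushes the expectation inside the sum, and asserts $E[\psi_j(\btheta)] = \bmu$ for each batch slot --- i.e., it implicitly models each element of $\sT$ as a marginally uniform random index, so that every summand individually has expectation equal to the full-data average. Your version instead encodes the randomness per data index via indicators $w_i = \mathbbm{1}[i\in\sT]$ and derives the marginal inclusion probability $p = m/T$ from the constraint $\sum_i w_i = m$, letting the $1/m$ and $m/T$ factors cancel. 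The two are dual bookkeepings of the same linearity-of-expectation argument, but yours buys genuine rigor that the paper's one-liner glosses over: the paper treats $|\sT|$ and the index set $\sT$ as fixed when it interchanges expectation and summation (so the middle expression $\frac{1}{|\sT|}\sum_{j\in\sT}E[\psi_j(\btheta)]$ is, strictly speaking, ill-formed --- the summation range is random), and it never pins down the sampling model under which $E[\psi_j] = \bmu$ holds. You explicitly identify both issues: the need to interpret ``equally likely to appear'' as equal marginal inclusion probabilities (valid with or without replacement), and the complication when $|\sT|$ is random, where $1/|\sT|$ cannot be pulled out of the expectation without a decoupling argument. The paper's proof is shorter and its slot-wise view extends trivially to sampling with replacement (where your indicators would need to be replaced by multiplicity counts, as you note); your subset view makes the hypothesis of the lemma precise and exposes exactly where it is used.
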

\begin{proof}
Algorithm~\ref{boundAlgorithm} returns 
\(
\bmu_{\sT} = \frac{1}{|\sT|}\sum_{j \in \sT} \psi_j(\btheta)
\), where $\psi_j(\btheta) =  -\nabla_{\btheta}\log(p(y_j|x_j,\btheta))$.
If each 
$j$ has an equal chance to appear in $\sT$, then 
\[
E\left[\frac{1}{|\sT|}\sum_{j \in \sT} \psi_j(\btheta) \right] 
= 
\frac{1}{|\sT|}\sum_{j \in \sT} E[\psi_j(\btheta)]
= \frac{1}{|\sT|}\sum_{j \in \sT}\bmu = \bmu\;.
\]
\end{proof}
Note that the hypothesis here is very weak: there is no stipulation that the batch size 
be of a certain size, grow with iterations, etc. This lemma therefore applies to  
a wide class of randomized bound methods. 

\begin{lemma}
\label{ExpectedSigmaLemma}
Denote by $\lambda_{\min}$ the infimum over all possible eigenvalues of $\bSigma_{\sS}$ over 
all choices of batches ($\lambda_{\min}$ may be $0$). 
Then  $E[(\bSigma_{\sS} + \eta \bI)^{-1}]$ satisfies
\[
\frac{1}{\eta + \lambda_{\max}} \bI \leq E[(\bSigma_{\sS} + \eta \bI)^{-1}] \leq \frac{1}{\eta + \lambda_{\min}} \bI\;.
\]
\end{lemma}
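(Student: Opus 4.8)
The plan is to bound the random matrix $(\bSigma_{\sS} + \eta \bI)^{-1}$ pointwise (i.e.\ for every realization of the batch $\sS$) in the positive-semidefinite order, and then take expectations, using the fact that the expectation operator preserves the Loewner order. The key structural fact is that $\bSigma_{\sS}$ is symmetric positive semidefinite for every choice of batch, since Algorithm~\ref{boundAlgorithm} builds it as an average of sums of outer products $({\bf f}_{x_j}(y)-{\bf r})({\bf f}_{x_j}(y)-{\bf r})^\top$ scaled by nonnegative $\tanh$-type coefficients. Consequently every eigenvalue of $\bSigma_{\sS}$ lies in $[\lambda_{\min}, \lambda_{\max}]$, where $\lambda_{\min}$ is the stated infimum over all batches and $\lambda_{\max}$ the corresponding supremum.

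First I would fix an arbitrary batch $\sS$ and diagonalize $\bSigma_{\sS} = Q \Lambda Q^\top$ with $Q$ orthogonal. Then $(\bSigma_{\sS} + \eta \bI)^{-1} = Q (\Lambda + \eta \bI)^{-1} Q^\top$, and each diagonal entry of $(\Lambda + \eta \bI)^{-1}$ is $(\lambda_i + \eta)^{-1}$ with $\lambda_i \in [\lambda_{\min}, \lambda_{\max}]$. Since the scalar map $\lambda \mapsto (\lambda + \eta)^{-1}$ is monotonically decreasing, we get the scalar bounds
\[
\frac{1}{\eta + \lambda_{\max}} \leq \frac{1}{\lambda_i + \eta} \leq \frac{1}{\eta + \lambda_{\min}}
\]
for each $i$, which translate into the matrix inequalities $\frac{1}{\eta + \lambda_{\max}} \bI \leq (\bSigma_{\sS} + \eta \bI)^{-1} \leq \frac{1}{\eta + \lambda_{\min}} \bI$ after conjugating by the orthogonal $Q$ (conjugation by an orthogonal matrix preserves the $\preceq$ order, and $Q \bI Q^\top = \bI$). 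These bounds hold for every realization of $\sS$.

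The final step is to take expectations across the deterministic inequality. Since $A \preceq B$ (in the Loewner order) implies $E[A] \preceq E[B]$ — because $\bw^\top E[\cdot]\bw = E[\bw^\top \cdot \,\bw]$ and expectation preserves scalar inequalities — applying this to both the lower and upper pointwise bounds immediately yields
\[
\frac{1}{\eta + \lambda_{\max}} \bI \leq E[(\bSigma_{\sS} + \eta \bI)^{-1}] \leq \frac{1}{\eta + \lambda_{\min}} \bI\;,
\]
as claimed. I do not anticipate a serious obstacle here; the only point requiring a little care is confirming that $\lambda_{\min}$ and $\lambda_{\max}$ are genuine uniform bounds over \emph{all} batches (so that the pointwise inequalities hold before averaging), and noting that $\eta > 0$ guarantees invertibility even when $\lambda_{\min} = 0$, so every quantity in sight is well defined.
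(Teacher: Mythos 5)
Your proof is correct and follows essentially the same route as the paper: a per-realization spectral bound $\frac{1}{\eta+\lambda_{\max}}\bI \leq (\bSigma_{\sS}+\eta\bI)^{-1} \leq \frac{1}{\eta+\lambda_{\min}}\bI$ (which the paper states directly as a quadratic-form inequality, and you justify via diagonalization), followed by taking expectations using the fact that expectation preserves the semidefinite order. Your version simply spells out the intermediate steps in more detail than the paper does.
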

\begin{proof}
For any vector $\bx$ and any realization of $\bSigma_{\sS}$, we have 
\[
\frac{1}{\eta + \lambda_{\max}}\|\bx\|^2 \leq \bx^T(\bSigma_{\sS} + \eta \bI)^{-1}\bx \leq \frac{1}{\eta + \lambda_{\min}}\|\bx\|^2\;,
\]
where $\lambda_{\max}$ depends on the data. Taking the expectation over $\sT$ of be above inequality gives the result. 
\end{proof}

\begin{theorem}
\label{globalConvergenceTheorem}
For any problem of form~\eqref{minLog}, apply iteration~\eqref{regBoundIteration}, 
where at each iteration $\bmu_{\sT}, \bSigma_{\sS}$ are obtained by Algorithm~\ref{boundAlgorithm} 
for two independently drawn batches subsets $\sT, \sS \subset [1, \dots, T]$  selected to satisfy the assumptions
of Lemma~\ref{ExpectationLemma}. 
Finally, suppose also that the step sizes $\alpha_k$ are 
square summable but not summable. Then $\sL_\eta(\btheta^k)$ converges to a finite value, 
and $\nabla \sL_\eta(\btheta^k) \rightarrow 0$. Furthermore, every limit point of $\btheta^k$
is a stationary point of $\sL_\eta$. 
\end{theorem}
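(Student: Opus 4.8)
The plan is to read iteration~\eqref{regBoundIteration} as a \emph{scaled} (preconditioned) stochastic gradient method and then run a Robbins--Siegmund supermartingale argument. Write $\bg^k := \bmu_{\sT} + \eta\btheta^k$ and $H_k := (\bSigma_{\sS} + \eta\bI)^{-1}$, so the update is $\btheta^{k+1} = \btheta^k - \alpha_k H_k \bg^k$. Conditioning on the history $\mathcal{F}_k$ (so $\btheta^k$ is fixed), Lemma~\ref{ExpectationLemma} gives $E[\bg^k \mid \mathcal{F}_k] = \bmu + \eta\btheta^k = \nabla\sL_\eta(\btheta^k)$, i.e. $\bg^k$ is an unbiased gradient estimate. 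Because $\sS$ and $\sT$ are drawn \emph{independently}, $H_k$ and $\bg^k$ are conditionally independent, so $E[H_k\bg^k\mid\mathcal{F}_k] = M_k\,\nabla\sL_\eta(\btheta^k)$ with $M_k := E[H_k]$; Lemma~\ref{ExpectedSigmaLemma} then gives $M_k \succeq c\,\bI$ with $c = 1/(\eta+\lambda_{\max}) > 0$ (the supremum being attained over the finitely many subsets $\sS$), while $0 \preceq H_k \preceq \tfrac{1}{\eta}\bI$ for every realization. I would also record two structural facts: since $\Omega$ is finite each ${\bf f}_{x_j}$ takes finitely many values, so the per-sample gradients $\psi_j(\btheta) = E_{p(\cdot\mid x_j,\btheta)}[{\bf f}_{x_j}] - {\bf f}_{x_j}(y_j)$ are uniformly bounded in $\btheta$, yielding a global noise bound $E[\|\bmu_{\sT}-\bmu\|^2]\le\sigma^2$ and a uniformly bounded Hessian, hence a global Lipschitz constant $L$ for $\nabla\sL_\eta$; moreover the $\tfrac{\eta}{2}\|\btheta\|^2$ term makes $\sL_\eta$ coercive and bounded below.

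Next I would invoke the descent lemma for $L$-Lipschitz gradients, $\sL_\eta(\btheta^{k+1}) \le \sL_\eta(\btheta^k) + \nabla\sL_\eta(\btheta^k)^\top(\btheta^{k+1}-\btheta^k) + \tfrac{L}{2}\|\btheta^{k+1}-\btheta^k\|^2$, substitute the update, and take $E[\cdot\mid\mathcal{F}_k]$. The linear term contributes $-\alpha_k\nabla\sL_\eta(\btheta^k)^\top M_k\nabla\sL_\eta(\btheta^k) \le -\alpha_k c\|\nabla\sL_\eta(\btheta^k)\|^2$, and the quadratic term is controlled via $\|H_k\|\le1/\eta$ and the variance bound by $\tfrac{L\alpha_k^2}{2\eta^2}(\|\nabla\sL_\eta(\btheta^k)\|^2+\sigma^2)$. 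For $k$ large enough that $\alpha_k \le c\eta^2/L$ these combine into
\begin{equation}
E[\sL_\eta(\btheta^{k+1})\mid\mathcal{F}_k] \le \sL_\eta(\btheta^k) - \tfrac{\alpha_k c}{2}\|\nabla\sL_\eta(\btheta^k)\|^2 + \tfrac{L\sigma^2}{2\eta^2}\alpha_k^2.
\end{equation}
Since $\sum_k\alpha_k^2<\infty$ and $\sL_\eta$ is bounded below, the Robbins--Siegmund supermartingale convergence theorem applies: $\sL_\eta(\btheta^k)$ converges almost surely to a finite value and $\sum_k\alpha_k\|\nabla\sL_\eta(\btheta^k)\|^2<\infty$. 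As $\sum_k\alpha_k=\infty$, the latter forces $\liminf_k\|\nabla\sL_\eta(\btheta^k)\|=0$.

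The hard part will be upgrading this $\liminf$ to $\lim_k\|\nabla\sL_\eta(\btheta^k)\|=0$. Here I would use the standard oscillation argument: if $\limsup_k\|\nabla\sL_\eta(\btheta^k)\|>0$, the iterates cross a fixed gradient-norm band infinitely often, and $L$-Lipschitzness of $\nabla\sL_\eta$ forces $\btheta$ to move a bounded-below amount on each crossing, so the step lengths accumulated over these crossings sum to infinity, contradicting $\sum_k\alpha_k\|\nabla\sL_\eta(\btheta^k)\|^2<\infty$; coercivity of $\sL_\eta$ (via $\eta>0$, using that the convergent values $\sL_\eta(\btheta^k)$ confine $\{\btheta^k\}$ to a bounded set where all constants are uniform) keeps this argument self-contained. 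Once $\nabla\sL_\eta(\btheta^k)\to0$ is in hand, stationarity of every limit point is immediate: if $\btheta^{k_i}\to\bar\btheta$, continuity of $\nabla\sL_\eta$ gives $\nabla\sL_\eta(\bar\btheta)=\lim_i\nabla\sL_\eta(\btheta^{k_i})=0$.
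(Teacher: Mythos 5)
Your proof is correct, but it takes a genuinely different route from the paper. The paper does not run the stochastic-approximation argument itself: it decomposes the update as $\btheta^{k+1}=\btheta^k-\alpha_k(\bs^k+\bw^k)$ with $\bs^k=E[(\bSigma_{\sS}^k+\eta\bI)^{-1}]\bg^k$ and $\bw^k$ the zero-mean remainder, then verifies the four hypotheses of Proposition~3 of Bertsekas--Tsitsiklis (unbiased error via independence of $\sT$ and $\sS$ plus Lemma~\ref{ExpectationLemma}; gradient-relatedness $(\bg^k)^\top\bs^k\geq\|\bg^k\|^2/(\eta+\lambda_{\max})$ via Lemma~\ref{ExpectedSigmaLemma}; bounded direction; and bounded second moment via the convex-hull argument for the per-sample gradients), and cites that proposition as a black box for all three conclusions. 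You instead prove the convergence machinery directly: descent lemma, Robbins--Siegmund to get a.s.\ convergence of $\sL_\eta(\btheta^k)$ and $\sum_k\alpha_k\|\nabla\sL_\eta(\btheta^k)\|^2<\infty$, then a band-crossing argument to upgrade $\liminf$ to $\lim$. The ingredients match one-for-one (your $c=1/(\eta+\lambda_{\max})$ is the paper's gradient-related constant, your $\|H_k\|\leq 1/\eta$ parallels its bounded-direction step, your $\sigma^2$ is its second-moment bound), so the real difference is the concluding machinery. What your route buys is self-containedness and transparency about where finiteness enters: because $\Omega$, the data, and the set of batches are finite, the noise $\bmu_\sT-\bmu$ is \emph{almost surely} uniformly bounded, which is exactly what lets your oscillation argument run pathwise on the full-probability event where Robbins--Siegmund applies (together with coercivity confining the trajectory). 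What the paper's route buys is brevity and generality: the cited proposition tolerates noise whose second moment grows like $\|\nabla\sL_\eta\|^2$ without a.s.\ bounds, and it imposes no smallness condition on the steps, whereas your descent-lemma step needs $\alpha_k\leq c\eta^2/L$ (harmless here, since square-summability forces $\alpha_k\to0$ and the finitely many early iterations can be absorbed, e.g.\ using $\|\nabla\sL_\eta\|^2\leq 2L(\sL_\eta-\sL_\eta^*)$ to fit the Robbins--Siegmund form). The one place to be careful if you write this out in full is the $\liminf$-to-$\lim$ upgrade, which is precisely the hard content of the proposition the paper cites; your sketch is valid for this problem only because of the a.s.\ noise bound, and would not suffice for unbounded noise.
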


Theorem~\ref{globalConvergenceTheorem} states the conclusions of~\cite[Proposition 3]{Bertsekas99gradientconvergence}, 
and so to prove it we need only check that the hypotheses of this proposition are satisfied. 

\begin{proof}
\cite{Bertsekas99gradientconvergence} consider algorithms of the form 
\[
\btheta^{k+1} = \btheta^{k} - \alpha_k (\bs^k + \bw^k)\;. 
\]
In the context of iteration~\eqref{regBoundIteration}, at each iteration we have 
\[
\bs^k + \bw^k = (\bSigma_{\sS}^k + \lambda \bI)^{-1} \bg_{\sT}^k,
\]
where $\bg_{\sT}^k = \bmu_{\sT}^k + \eta \btheta^k$, and $\bg^k$ is the full gradient of the regularized problem~\eqref{minLog}. 
We choose 
\[
\bs^k &=  E[(\bSigma_{\sS}^k+\eta \bI)^{-1}] \bg^k, \quad \bw^k =  (\bSigma_{\sS}^k + \eta \bI)^{-1} \bg_{\sT}^k- \bs^k.
\]
We now have the following results: 
\begin{enumerate}
\item Unbiased error: 
\begin{equation}
\label{unbiased}
\begin{aligned}
E[\bw^k] &= E[(\bSigma_{\sS}^k+\eta \bI)^{-1} \bg_{\sT}^k - \bs^k] =  E[(\bSigma_{\sS}^k + \eta \bI)^{-1}]E[\bg_{\sT}^k] - \bs^k= 0\;,
\end{aligned}
\end{equation}
where the second equality is obtained by independence of the batches $\sT$ and $\sS$, and the last equality
uses Lemma~\ref{ExpectationLemma}. 
\item Gradient related condition: 
\begin{equation}
\label{MinDescent}
\begin{aligned}
(\bg^k)^T\bs^k &= (g^k)^TE[(\bSigma_{\sS}^k + \eta \bI)^{-1}] \bg^k
& \geq \frac{\|\bg^k\|^2}{\eta+\lambda_{\max}}.
\end{aligned}
\end{equation}
\item Bounded direction:
\begin{equation}
\label{BoundedDirection}
\|\bs^k\| \leq \frac{\|\bg^k\|}{\eta+\lambda_{\min}} .
\end{equation}
\item Bounded second moment: 

By part 1, we have 
\begin{eqnarray}
\label{BoundedMomentSimple}
\begin{split}
E[\|\bw^k\|^2]  &\leq  E[\|(\bSigma_{\sS}^k + \eta \bI)^{-1} \bg_{\sT}^k\|^2\\ 
& \leq \frac{E[\|\bg_{\sT}^k\|^2]  }{(\eta + \lambda_{\min})^2}  = \frac{\text{tr}(\text{cov}[\bg_{\sT}^k]) +\|\bg^k\|^2 }{(\eta + \lambda_{\min})^2}.
\end{split}
\end{eqnarray}
%
\end{enumerate}
The covariance matrix of $\bg_\sT^k$ is proportional to the covariance matrix of the set of individual (data-point based) gradient contributions, 
and for problems of form~\eqref{minLog} these contributions lie in the convex hull of the data, so in particular the trace of the covariance must be finite.  
Taken together, these results show all hypotheses of~\cite[Proposition 3]{Bertsekas99gradientconvergence} 
are satisfied, and the result follows. 
\end{proof}

Theorem~\eqref{globalConvergenceTheorem} applies to any stochastic and semistochastic variant
of the method.  Note that two independent data samples $\sT$ and $\sS$ are required to prove~\eqref{unbiased}. 
Computational complexity motivates different strategies for selecting choose $\sT$ and $\sS$.
In particular, it is natural to use larger mini-batches to estimate the gradient, 
and smaller mini-batch sizes for the estimation of the second-order curvature term.
Algorithms of this kind have been explored in the context of stochastic Hessian methods~\cite{ByrdCNN11}. 
We describe our implementation details in Section~\ref{sec:imp}. 

\subsection{Rates of Convergence for Logistic Regression}

The structure of objective~\eqref{minLog} allows for a much stronger convergence theory. 
We first present a lemma characterizing strong convexity and Lipschitz constant for~\eqref{minLog}. 
Both of these properties are crucial to the convergence theory. 

\begin{lemma}
\label{lipschitzLemma}
The objective $\sL_\eta$ in~\eqref{minLog} has a gradient that is uniformly norm bounded,  
and Lipschitz continuous. 
\end{lemma}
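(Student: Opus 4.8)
The plan is to prove the two claims—uniform norm-boundedness of the gradient and Lipschitz continuity of the gradient—by exploiting the explicit structure of the log-linear model. First I would write the gradient of a single data-point term explicitly. For the objective $\sL_\eta$ in~\eqref{minLog}, each term contributes $\nabla_{\btheta}\log(Z_{x_j}(\btheta)) - {\bf f}_{x_j}(y_j) + \eta\btheta$, and the key observation is that
\[
\nabla_{\btheta}\log(Z_{x_j}(\btheta)) = \sum_y p(y|x_j,\btheta)\,{\bf f}_{x_j}(y) = E_{p}[{\bf f}_{x_j}],
\]
so this gradient is a convex combination (an expectation under $p$) of the feature vectors ${\bf f}_{x_j}(y)$. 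Hence $\nabla_{\btheta}\log(Z_{x_j})-{\bf f}_{x_j}(y_j)$ lies in the difference set of the convex hull of $\{{\bf f}_{x_j}(y): y \in \Omega\}$, which is bounded whenever the feature map has bounded range. This immediately yields a uniform bound on the gradient once the regularization contribution $\eta\btheta$ is controlled on the relevant domain, and I would phrase the bound in terms of the maximal feature norm $\max_{j,y}\|{\bf f}_{x_j}(y)\|$.

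For Lipschitz continuity of the gradient, the plan is to bound the Hessian in operator norm. Differentiating the log-partition expectation once more, the Hessian of $\log(Z_{x_j}(\btheta))$ is exactly the covariance of the features under $p(\cdot|x_j,\btheta)$,
\[
\nabla^2_{\btheta}\log(Z_{x_j}(\btheta)) = \mathrm{cov}_{p}[{\bf f}_{x_j}],
\]
which is positive semidefinite. A covariance matrix of vectors contained in a bounded set has bounded operator norm—concretely it is dominated by the second moment, which is at most $\max_{j,y}\|{\bf f}_{x_j}(y)\|^2$. Adding the $\eta\bI$ contribution from the regularizer, the full Hessian of $\sL_\eta$ is uniformly bounded in spectral norm, and a uniform bound on the Hessian spectral norm is equivalent to Lipschitz continuity of the gradient with that constant. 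Averaging over $j$ preserves the bound.

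The main obstacle—or rather the key modeling assumption that must be made explicit—is ensuring the feature vectors ${\bf f}_{x_j}(y)$ lie in a bounded set uniformly over $j \in [1,\dots,T]$ and $y \in \Omega$. Since $\Omega$ is finite (it has $n$ elements) and $T$ is finite, this boundedness is automatic in the finite-data, finite-label setting considered here: $\max_{j,y}\|{\bf f}_{x_j}(y)\|$ is a maximum over finitely many continuous evaluations and is therefore finite. I would state this boundedness constant once, call it $B$, and carry it through both estimates so that the gradient bound scales like $B + \eta\|\btheta\|$ (bounded on any bounded iterate region, or outright if iterates stay bounded by the coercivity of the regularized objective) and the Lipschitz constant scales like $B^2 + \eta$. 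The slightly delicate point to state carefully is that the uniform gradient bound holds on the sublevel set actually visited by the iterates rather than on all of $\mathbb{R}^d$, which is all that the subsequent rate-of-convergence analysis requires.
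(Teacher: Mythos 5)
Your proof takes essentially the same route as the paper's: write the gradient explicitly as $\frac{1}{T}\sum_{j}\bigl(E_{\bp_j}[{\bf f}_{x_j}(\cdot)]-{\bf f}_{x_j}(y_j)\bigr)+\eta\btheta$, identify the Hessian as an average of feature covariances plus $\eta\bI$, and bound its norm by $\max_{y,j}\|{\bf f}_{x_j}(y)\|^2+\eta$ to get the Lipschitz constant. If anything you are slightly more careful than the paper on the norm-boundedness claim: the paper's proof addresses only the Lipschitz part and silently ignores that the $\eta\btheta$ term rules out a gradient bound uniform over all of $\mathbb{R}^d$, whereas you correctly restrict that claim to the bounded (sublevel-set) region actually visited by the iterates, which is all the subsequent convergence analysis needs.
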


\begin{proof}
The function $\sL_\eta$ has a Lipschitz continuous gradient if there exists an $L$
such that 
\[
\|\nabla \sL_\eta(\btheta^1) - \nabla \sL_\eta(\btheta^0)\|\leq L \|\btheta^1 - \btheta^0\|
\]
holds for all $(\btheta^1, \btheta^0)$. 
Any uniform bound for $\text{trace}(\nabla^2 \sL_\eta)$ is a Lipschitz bound for $\nabla \sL_\eta$. 
Define 
\[
a_{y,j} := h_{x_j}(y) \exp(\btheta^\top {\bf f}_{x_j}(y))\;,
\]
and note $a_{y,j} \geq 0$. Let $\bp_j$ be the empirical density 
where the probability of observing $y$ is given by $\frac{a_{y,j}}{\sum_{y} a_{y,j}}$.
The gradient of~\eqref{minLog} is given by 
\begin{equation}
\label{gradient}
\begin{aligned}
\frac{1}{T}\sum_{j=1}^{T}& \left(\left ( \sum_{y}\frac{a_{y,j}{\bf f}_{x_j}(y)}
{\sum_y a_{y,j}}\right ) -  {\bf f}_{x_j}(y_j)\right) + \eta\btheta=
\frac{1}{T}\sum_{j=1}^{T} \left(E_{\bp_j}[{\bf f}_{x_j}(\cdot)] -  {\bf f}_{x_j}(y_j)\right) + \eta\btheta
\end{aligned}
\end{equation}
It is straightforward to check that the Hessian is given by 
\begin{equation}
\label{hessian}
\nabla^2 \sL_\eta = \frac{1}{T}\sum_{j=1}^{T} cov_{\bp_j}[{\bf f}_{x_j}(\cdot)] + \eta \bI
\end{equation}
where $cov_{\bp_j}[\cdot]$ denotes the covariance matrix with respect to the empirical density
function $\bp_j$. 
Therefore a global bound for the Lipschitz constant $L$ 
is given by $\max_{y,j} \|{\bf f}_{x_j}(y)\|^2 + \eta \bI$, which completes the proof. 
\end{proof}

Note that 
$\sL_\eta$ is strongly convex for any positive $\eta$.
We now present a convergence rate result, using results from~\cite[Theorem 2.2]{FriedlanderSchmidt2012}.
\begin{theorem}
\label{rateConvergenceTheorem}
There exist $\mu, L > 0, \rho>0$
such that 
\begin{equation}
\label{Conditions}
\begin{aligned}
\|\nabla \sL_\eta(\btheta_1) - \nabla \sL_\eta(\btheta_2)\|_{**} \leq L\|\btheta_2-\btheta_1\|_{*}\\
\sL_\eta(\btheta_2) \geq \sL_\eta(\btheta_1) + (\btheta_2 - \btheta_1)^T\nabla \sL_\eta(\btheta_1) + \frac{1}{2}\rho \|\btheta_2-\btheta_1\|_{*}
\end{aligned}
\end{equation}
 where $\|\btheta\|_* = \sqrt{\btheta^T (\bSigma_{\sS}^k+\eta \bI)\btheta}$ and $\|\btheta\|_{**}$ 
 is the corresponding dual norm $\sqrt{\btheta^T (\bSigma_{\sS}^k + \eta \bI)^{-1}\btheta}$.  
Furthermore, take $\alpha_k = \frac{1}{L}$ in~\eqref{regBoundIteration}, and define 
$B_k = \|\nabla \sL_\eta^k - \bg_\sT^k\|^2$, the square error incurred in the gradient at iteration $k$. 
Provided a batch growth schedule with $\lim_{k\rightarrow \infty} \frac{B_{k+1}}{B_k}\leq 1$, 
for each iteration~\eqref{regBoundIteration} we have (for any $\epsilon > 0$)
\begin{equation}
\label{rateBound}
\sL_\eta(\btheta^k) - \sL_\eta(\btheta^*) \leq \left(1-\frac{ \rho}{ L}\right)^k[\sL_\eta(\btheta^0) - \sL_\eta(\btheta^*)] + \mathcal{O}(C_k)\;,
\end{equation}
with $C_k = \max\{ B_k, (1-\frac{ \rho}{ L} + \epsilon)^k\}$.
\end{theorem}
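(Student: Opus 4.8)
The plan is to follow the same reduction as for Theorem~\ref{globalConvergenceTheorem}: the bound~\eqref{rateBound} is the conclusion of~\cite[Theorem 2.2]{FriedlanderSchmidt2012}, so it suffices to exhibit the two structural conditions~\eqref{Conditions} with constants $L,\rho$ that are uniform in $k$, and to recognize iteration~\eqref{regBoundIteration} as the inexact (preconditioned) gradient step treated there. The key device is the choice of metric: writing $\bm{M}^k := \bSigma_{\sS}^k + \eta\bI$, the norms are $\|\btheta\|_* = \sqrt{\btheta^T\bm{M}^k\btheta}$ and $\|\btheta\|_{**} = \sqrt{\btheta^T(\bm{M}^k)^{-1}\btheta}$, and with $\alpha_k = 1/L$ the update reads $\btheta^{k+1} = \btheta^k - \tfrac{1}{L}(\bm{M}^k)^{-1}\bg_\sT^k$. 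Thus $\bg_\sT^k$ is the approximate gradient and $B_k = \|\nabla\sL_\eta^k - \bg_\sT^k\|^2$ is exactly the squared gradient error that~\cite{FriedlanderSchmidt2012} tracks.

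First I would record the two-sided curvature estimates supplied by Lemma~\ref{lipschitzLemma}. The Hessian~\eqref{hessian} is a sum of covariance matrices plus $\eta\bI$, so $\eta\bI \preceq \nabla^2\sL_\eta \preceq (\eta + L_0)\bI$ with $L_0 = \max_{y,j}\|{\bf f}_{x_j}(y)\|^2$, and the same reasoning as in Theorem~\ref{globalConvergenceTheorem} (the curvature contributions lie in the convex hull of the data) gives $\eta\bI \preceq \bm{M}^k \preceq (\eta + \lambda_{\max})\bI$, where $\lambda_{\max}$ is the supremum of eigenvalues of $\bSigma_{\sS}$ over all admissible batches. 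When the quadratic bound~\eqref{boundProperty} furnishes the majorization $\nabla^2\sL_\eta \preceq \bm{M}^k$, the Lipschitz condition in~\eqref{Conditions} holds with $L = 1$: writing $\nabla\sL_\eta(\btheta_1)-\nabla\sL_\eta(\btheta_2) = \bm{H}\bdelta$ for an averaged Hessian $\bm{H}\preceq\bm{M}^k$ and setting $A = (\bm{M}^k)^{-1/2}\bm{H}(\bm{M}^k)^{-1/2}$, one has $\|\bm{H}\bdelta\|_{**}^2 = \bdelta^T\bm{H}(\bm{M}^k)^{-1}\bm{H}\bdelta = u^TA^2u \leq u^Tu = \|\bdelta\|_*^2$ with $u = (\bm{M}^k)^{1/2}\bdelta$, since $0\preceq A\preceq\bI$ forces $A^2\preceq\bI$.

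Next I would verify the strong-convexity inequality. A second-order expansion of $\sL_\eta$ together with $\nabla^2\sL_\eta\succeq\eta\bI$ gives $(\btheta_2-\btheta_1)^T\nabla^2\sL_\eta(\btheta_2-\btheta_1)\geq \eta\|\btheta_2-\btheta_1\|^2 \geq \tfrac{\eta}{\eta+\lambda_{\max}}\|\btheta_2-\btheta_1\|_*^2$, which is the second condition in~\eqref{Conditions} with $\rho = \tfrac{\eta}{\eta+\lambda_{\max}}$ (the norm there being understood as squared). Crucially both $L$ and $\rho$ are independent of $k$ and of the particular batch drawn, so the contraction factor $1-\rho/L$ in~\eqref{rateBound} is the same at every iteration. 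With~\eqref{Conditions} in hand, the update identified as an inexact gradient step with squared error $B_k$, and the batch-growth schedule supplying $\lim_{k\to\infty}B_{k+1}/B_k\leq 1$ to control the accumulation of errors, the recursion of~\cite[Theorem 2.2]{FriedlanderSchmidt2012} applies and yields~\eqref{rateBound} with $C_k = \max\{B_k,(1-\rho/L+\epsilon)^k\}$.

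The main obstacle is that the metric $\bm{M}^k$, and hence the very norms in which~\eqref{Conditions} are phrased, changes at each iteration because $\sS$ is resampled; moreover, for subsampled batches the exact majorization $\nabla^2\sL_\eta\preceq\bm{M}^k$ need not hold, precisely the gap flagged earlier in this section. I would address both points through the uniform spectral bounds above: even without exact majorization, $\eta\bI\preceq\bm{M}^k\preceq(\eta+\lambda_{\max})\bI$ and $\eta\bI\preceq\nabla^2\sL_\eta\preceq(\eta+L_0)\bI$ together guarantee a finite $L$ and a positive $\rho$ valid for \emph{every} realization of $\bSigma_{\sS}^k$, so the hypotheses of~\cite[Theorem 2.2]{FriedlanderSchmidt2012} hold at each step with one fixed pair $(L,\rho)$. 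The changing preconditioner then only reshapes the geometry within a single step and does not disturb the one-step contraction, while the discrepancy between the sampled and full curvature is absorbed, along with the sampled-gradient error, into the $B_k$ term whose decay is governed by the batch-growth assumption.
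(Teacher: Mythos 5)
Your proposal takes essentially the same route as the paper: both verify the two conditions~\eqref{Conditions} with batch-uniform constants by sandwiching $\bSigma_{\sS}^k + \eta\bI$ between $\eta\bI$ and $(\eta+\lambda_{\max})\bI$ (Lemma~\ref{ExpectedSigmaLemma}) and using the Hessian bounds from Lemma~\ref{lipschitzLemma}, then invoke \cite[Theorem 2.2]{FriedlanderSchmidt2012} to obtain~\eqref{rateBound}. If anything, your norm conversions are carried out more carefully than the paper's one-line version (your $\rho = \eta/(\eta+\lambda_{\max})$ reflects a two-sided change of norm, whereas the paper's $L = \tilde L/\sqrt{\eta+\lambda_{\min}}$ and $\rho = \tilde\rho/\sqrt{\eta+\lambda_{\max}}$ account for only one side), and your explicit treatment of the per-iteration change of metric and of the possible failure of the exact majorization $\nabla^2\sL_\eta \preceq \bSigma_{\sS}^k + \eta\bI$ addresses points the paper passes over silently.
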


\begin{proof}
Let $\tilde L$ denote the bound on the Lipschitz constant of $g$ is provided in~\eqref{gradient}. 
By the conclusions of Lemma~\ref{ExpectedSigmaLemma}, we can take $L = \frac{1}{\sqrt{\eta + \lambda_{\min}}} \tilde L$.
Let $\tilde \rho$ denote the minimum eigenvalue of~\eqref{hessian} (note that $\tilde\rho \geq \eta)$. 
Then take $\rho = \frac{1}{\sqrt{\eta + \lambda_{\max}}}\tilde \rho$.  
The result follows immediately by~\cite[Theorem 2.2]{FriedlanderSchmidt2012}.  
\end{proof}


\section{Implementation details}
\label{sec:imp}
In this section, we briefly discuss important implementation details as well as describe the comparator methods we use for our algorithm. 

\subsection{Efficient inexact solvers}
\label{sec:Efficient}

The linear system we have to invert in iteration~\eqref{regBoundIteration} 
has very special structure. The matrix $\bSigma$ returned by Algorithm~\ref{boundAlgorithm}
may be written as $\bSigma = \bS\bS^T$, where each column of $\bS$ is proportional to 
one of the vectors $({\bf f}_{x_j}(y)-{\bf r})$ computed by the bound. When the dimensions of $\btheta$ are large, it is not practical to compute the $\bSigma$ explicitly. 
Instead, to compute the update in iteration~\eqref{regBoundIteration}, we take advantage of the fact that 
\[
\bSigma \bx = \bS(\bS^T\bx),
\]
and use $\bS$ (computed with a simple modification to the bound method) to implement the action of $\bSigma$.
When $\bS \in \mathbb{R}^{d\times k}$ ($k$ is a mini-batch size), 
the action of the transpose on a vector can be computed in $O(dk)$, which is very efficient 
for small $k$. The action of the regularized curvature approximation $\bSigma + \eta \bI$ follows
immediately. Therefore, it is efficient to use iterative minimization schemes, such as \textsf{lsqr},  
conjugate gradient, or others to compute the updates. Moreover, using only a few 
iterations of these methods further regularizes the subproblems~\cite{Martens2012,vogel}.

It is interesting to note that even when $\eta = 0$, and $\bSigma_{\sT}$ is not invertible, 
it makes sense to consider inexact updates.  
To justify this approach, we present a range lemma. A similar lemma appears in~\cite{Martens2012} for 
a different quadratic approximation. 
\begin{lemma}
\label{RangeLemma}
For any $\sT$, we have $\bmu_{\sT} \in \mathcal{R}(\bSigma_{\sT})$.
\end{lemma}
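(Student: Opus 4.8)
The plan is to exploit that $\bSigma_{\sT}$ is symmetric positive semidefinite, so that $\mathcal{R}(\bSigma_{\sT}) = \ker(\bSigma_{\sT})^{\perp}$; it therefore suffices to show that $\bmu_{\sT}$ is orthogonal to every vector in $\ker(\bSigma_{\sT})$. First I would record the structure produced by Algorithm~\ref{boundAlgorithm}: $\bSigma_{\sT} = \frac{1}{|\sT|}\sum_{j\in\sT}{\bf S}_j$ where each ${\bf S}_j = \sum_y c_{y,j}\,({\bf f}_{x_j}(y)-{\bf r}_j^{(y)})({\bf f}_{x_j}(y)-{\bf r}_j^{(y)})^\top$, with ${\bf r}_j^{(y)}$ the running value of ${\bf r}$ at the moment $y$ is processed and weight $c_{y,j} = \frac{\tanh(\frac12\log(\alpha/z))}{2\log(\alpha/z)}$. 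A short check shows $c_{y,j}>0$ for every term after the first (the first coefficient vanishes in the $z\to 0^+$ limit), so each ${\bf S}_j$ is PSD and, since the range of a nonnegative combination of rank-one terms is the span of its participating vectors, $\ker(\bSigma_{\sT}) = \{\bx : \bx^\top({\bf f}_{x_j}(y)-{\bf r}_j^{(y)}) = 0 \ \text{for all } j\in\sT \text{ and all } y \text{ with } c_{y,j}>0\}$.

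The crux is then to show that this kernel condition forces, for each fixed $j$, the scalar $\bx^\top {\bf f}_{x_j}(y)$ to be constant in $y$. I would argue by induction on the order in which the labels $y$ are processed inside the Bound Computation subroutine. The running mean ${\bf r}_j^{(y)}$ is always a convex combination of the feature vectors ${\bf f}_{x_j}(y')$ processed strictly before $y$, as follows from the update ${\bf r}\leftarrow \frac{z}{z+\alpha}{\bf r}+\frac{\alpha}{z+\alpha}{\bf f}_{x_j}(y)$; after the first label the limit gives ${\bf r}_j^{(2)} = {\bf f}_{x_j}(y^{(1)})$. Setting $c := \bx^\top {\bf f}_{x_j}(y^{(1)})$, the kernel condition at each subsequent $y$ reads $\bx^\top {\bf f}_{x_j}(y) = \bx^\top {\bf r}_j^{(y)}$, and since by the inductive hypothesis $\bx^\top{\bf r}_j^{(y)} = c$, we get $\bx^\top{\bf f}_{x_j}(y) = c$ for every $y$ in the support. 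Because the final mean ${\bf r}_j$ is a convex combination of these feature vectors and the target ${\bf f}_{x_j}(y_j)$ is itself one of them, both satisfy $\bx^\top {\bf r}_j = c = \bx^\top {\bf f}_{x_j}(y_j)$, hence $\bx^\top({\bf r}_j - {\bf f}_{x_j}(y_j)) = 0$.

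Summing over $j\in\sT$ and recalling that $\bmu_{\sT} = \frac{1}{|\sT|}\sum_{j\in\sT}({\bf r}_j - {\bf f}_{x_j}(y_j))$ then gives $\bx^\top\bmu_{\sT} = 0$ for every $\bx\in\ker(\bSigma_{\sT})$, which is exactly $\bmu_{\sT}\in\ker(\bSigma_{\sT})^{\perp} = \mathcal{R}(\bSigma_{\sT})$. I expect the main obstacle to be the bookkeeping around the centering vector ${\bf r}_j^{(y)}$: unlike a textbook covariance, the bound centers each outer product at a \emph{different} partial mean, and the $z\to 0^+$ initialization makes the very first coefficient degenerate. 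The induction on processing order is what tames both issues, reducing the entire statement to the elementary fact that a vector orthogonal to all the consecutive increments must see a constant value of $\bx^\top {\bf f}_{x_j}(\cdot)$ across the support.
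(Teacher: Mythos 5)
Your proof is correct, and it takes a genuinely different (dual) route from the paper's. The paper's own proof is primal: it factors $\bSigma_{\sT} = \bm L \bm D \bm L^\top$, with columns $\bm l_k = {\bf f}_{x_j}(y_k) - \bm r^k$ indexed by the bound-computation iterations and positive weights on the diagonal of $\bm D$, and then simply asserts that $\bmu_{\sT}$ lies in $\mathcal{R}(\bm L)$ ``by construction,'' concluding membership in $\mathcal{R}(\bSigma_{\sT})$. You instead use symmetry of $\bSigma_{\sT}$ to reduce the claim to $\bmu_{\sT} \perp \ker(\bSigma_{\sT})$, characterize the kernel as the vectors annihilating every positively weighted column, and run an induction on the processing order to show that any such $\bx$ sees a constant value $\bx^\top {\bf f}_{x_j}(\cdot) = c$ across the support, whence $\bx^\top(\bm r_j - {\bf f}_{x_j}(y_j)) = 0$ for each $j$ and so $\bx^\top \bmu_{\sT} = 0$. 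Your version is more careful on two points that the paper's one-line conclusion glosses over. First, $\mathcal{R}(\bm L \bm D \bm L^\top) = \mathcal{R}(\bm L)$ only when all diagonal weights are strictly positive, and in the $z \rightarrow 0^+$ limit the first weight $\tanh(\tfrac12\log(\alpha/z))/(2\log(\alpha/z))$ vanishes, so membership in $\mathcal{R}(\bm L)$ does not by itself suffice; your induction, anchored at $\bm r^{(2)} = {\bf f}_{x_j}(y^{(1)})$, handles exactly this degenerate first column. Second, the paper leaves implicit why $\bmu_{\sT}$ is a combination of the columns (one must note that the updates of $\bm r$ move by multiples of the $\bm l_k$, so the leading ${\bf f}_{x_j}(y^{(1)})$ cancels in $\bm r_j - {\bf f}_{x_j}(y_j)$), whereas your kernel-orthogonality argument never needs an explicit preimage. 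The trade-off: the paper's argument is shorter and is fully rigorous when $z$ is initialized at a small strictly positive value (so $\bm D \succ 0$), while yours is somewhat longer but establishes the stated limit case without caveat.
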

\begin{proof}
The matrix $\bSigma_{\sT}$ is formed by a sum of weighted outer products $({\bf f}_{x_j}(y)-\bm r)({\bf f}_{x_j}(y)-\bm r)^\top$.
We can therefore write
\[
\bSigma_{\sT} = \bm L\bm D\bm L^T
\]
where $\bm L = [\bm l_1, \dots, \bm l_{|\Omega|\cdot|\sT|}]$, $\bm l_k = {\bf f}_{x_j}(y_k)-\bm r^k$ ($k$ is the current iteration of the bound computation), and $\bm D$ is a diagonal matrix with weights 
$\bm D_{kk}=\frac{1}{|\sT|}\frac{\tanh(\frac{1}{2} \log (\alpha_k/z_k))}{2 \log ( \alpha_k/z_k)}$, where the quantities $\alpha_k, z_k$
 correspond to iterations in Algorithm~\eqref{boundAlgorithm}.
Since $\bmu$ is in the range of $\bm L$ by construction, 
it must also be the range of $\bSigma_{\sT}$.
\end{proof}

Lemma~\ref{RangeLemma} tells us that 
there is always a solution to the linear system $\bSigma_{\sT}{\bf \Delta \theta} = \bmu_{\sT}$, 
even if $\bSigma_{\sT}$ is singular. In particular, a minimum norm solution can be found 
using the Moore-Penrose pseudoinverse, or by simply applying 
\textsf{lsqr} or \textsf{cg}, which is useful in practice 
when the dimension $d$ is large. For many problems, using a small number of \textsf{cg} iterations 
both speeds up the algorithm and serves as additional regularization at the earlier iterations, since
the (highly variable) initially small problems are not fully solved. 
\subsection{Mini-batches selection scheme}
In our experiments, we use a simple linear interpolation scheme to grow the batch sizes for both the gradient 
and curvature term approximations. 
In particular, each batch size (as a function of iteration $k$) is given by 
\[
b^k = \min(b^{\text{cap}} , b^1 + \textsf{round}((k-1) \gamma)),
\] 
where $b^\text{cap}$ represents the cap on the maximum allowed size, $b^1$ is the initial batch size, and $\gamma$ gives the rate of increase. 
In order to specify the selections chosen, we will simply give values for each of $(b^1_{\bmu}, b^1_{\bSigma}, \gamma_\bmu, \gamma_\bSigma)$.
For all experiments, the cap $b^\text{cap}_\bmu$ on the gradient computation was the full training set, the cap $b^\text{cap}_\bSigma$
for the curvature term was taken to be 200, initial $b^1_{\bmu}$ and $b^1_{\bSigma}$ were both set to $5$.   
At each iteration of SQB, the parameter vector is updated as follows:
\[\btheta_{k+1} = \btheta_{k} - \bxi^k,
\]
where $\bxi^k = \alpha(\bSigma_{\sS}^k + \eta \bI)^{-1}(\bmu_{\sT}^k + \eta \btheta^k)$ ($\alpha$ is the step size; we use constant step size for SQB in our experiments). Notice that $\bxi^k$ is the solution to the linear system $(\bSigma_{\sS}^k + \eta\bI)\bxi^k = \bmu_{\sT}^k + \eta \btheta^k$ and can be efficiently computed using the \textsf{lsqr} solver (or any other iterative solver). For all experiments, we ran a small number ($l$) iterations of \textsf{lsqr}, where $l$ was chosen from the set $\{5,10,20\}$, before updating the parameter vector (this technique may be viewed as performing conjugate gradient on the bound), and chose $l$ with the best performance (the fastest and most stable convergence).


\subsection{Step size}
One of the most significant disadvantages of standard stochastic gradient methods~\cite{citeulike:432261,DBLP:conf/nips/BottouC03} is the choice of the step size. Stochastic gradient algorithms can achieve dramatic convergence rate if the step size is badly tuned~\cite{Nemirovski,DBLP:conf/nips/RouxSB12}. 
An advantage of computing updates using approximated curvature terms is that the inversion also establishes a scale for the problem, 
and requires minimal tuning. This is well known (phenomenologically) in inverse problems. In all experiments below, we used a constant step size; 
for well conditioned examples we used step size of $1$, and otherwise $0.1$.

\subsection{Comparator methods}
We compared SQB method with the variety of competitive state-of-the-art methods which we list below:
\begin{itemize}
\item \textbf{L-BFGS}: limited-memory BFGS method (quasi-Newton method) tuned for log-linear models which uses both first- and second-order information about the objective function (for L-BFGS this is gradient and approximation to the Hessian); we use the competitive implementation obtained from \textsf{\small{http://www.di.ens.fr/~mschmidt/Software/minFunc.html}}
\item \textbf{SGD}: stochastic gradient descent method with constant step size; we use the competitive implementation obtained from \textsf{\small{http://www.di.ens.fr/~mschmidt/Software/SAG.html}} which is analogous to L. Bottou implementation but with pre-specified step size
\item \textbf{ASGD}: averaged stochastic gradient descent method with constant step size; we use the competitive implementation obtained from \textsf{http://www.di.ens.fr/~mschmidt/Software/SAG.html} which is analogous to L. Bottou implementation but with pre-specified step size
\item \textbf{SAG}: stochastic average gradient method using the estimate of Lipschitz constant $L_k$ at iteration $k$ set constant to the global Lipschitz constant; we use the competitive implementation of~\cite{DBLP:conf/nips/RouxSB12} obtained from \textsf{\small{http://www.di.ens.fr/~mschmidt/Software/SAG.html}}
\item \textbf{SAGls}: stochastic average gradient method with line search, we use the competitive implementation of~\cite{DBLP:conf/nips/RouxSB12} obtained from \textsf{\small{http://www.di.ens.fr/~mschmidt/Software/SAG.html}}; the algorithm adaptively estimates Lipschitz constant $L$ with respect to the logistic loss function using line-search

Since our method uses the constant step size we chose to use the same scheme for the competitor methods like SGD, ASGD and SAG. For those methods we tuned the step size to achieve the best performance (the fastest and most stable convergence). Remaining comparators (L-BFGS and SAGls) use line-search.
\end{itemize}

\section{Experiments}
\label{sec:convex}

\begin{figure}[t]
  \center
\includegraphics[width = 1.81in]{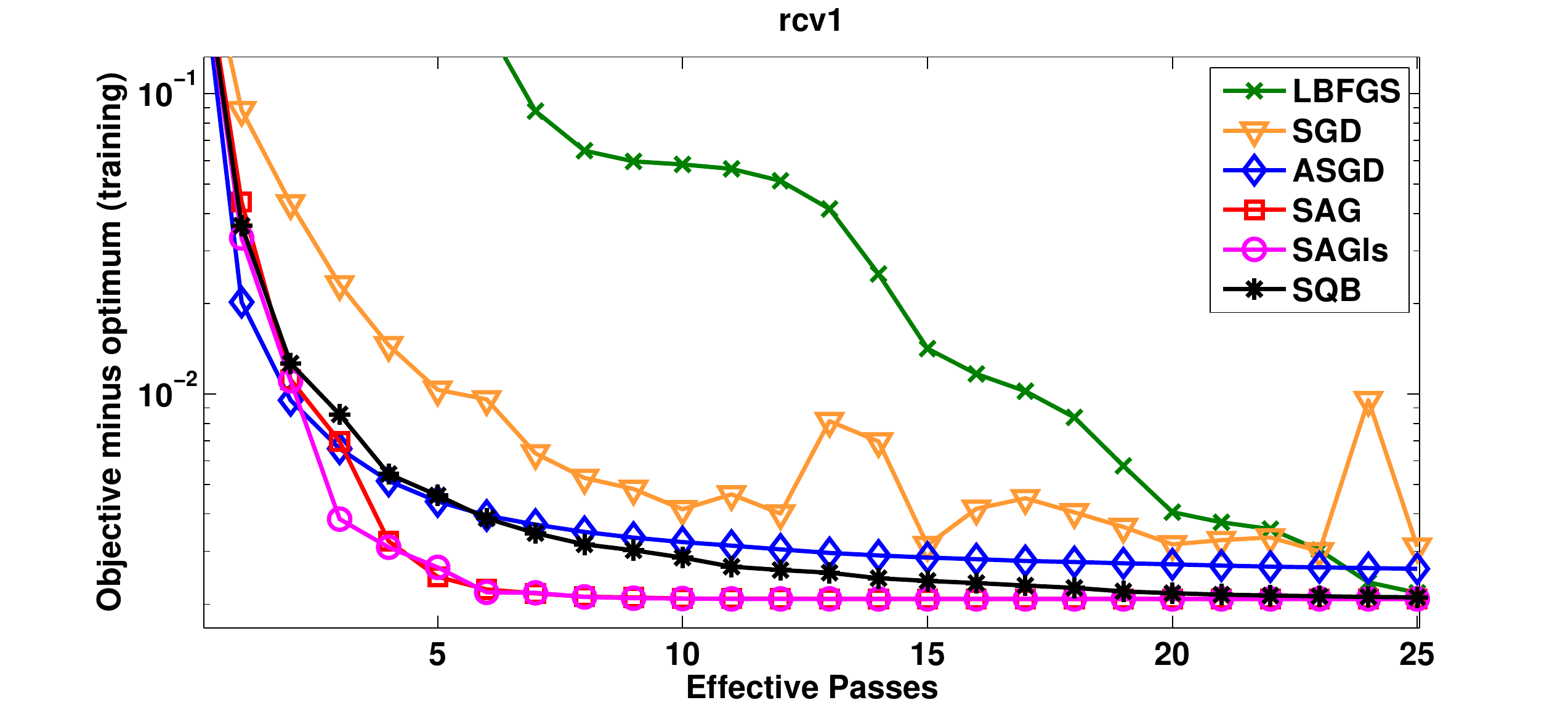}
\includegraphics[width = 1.81in]{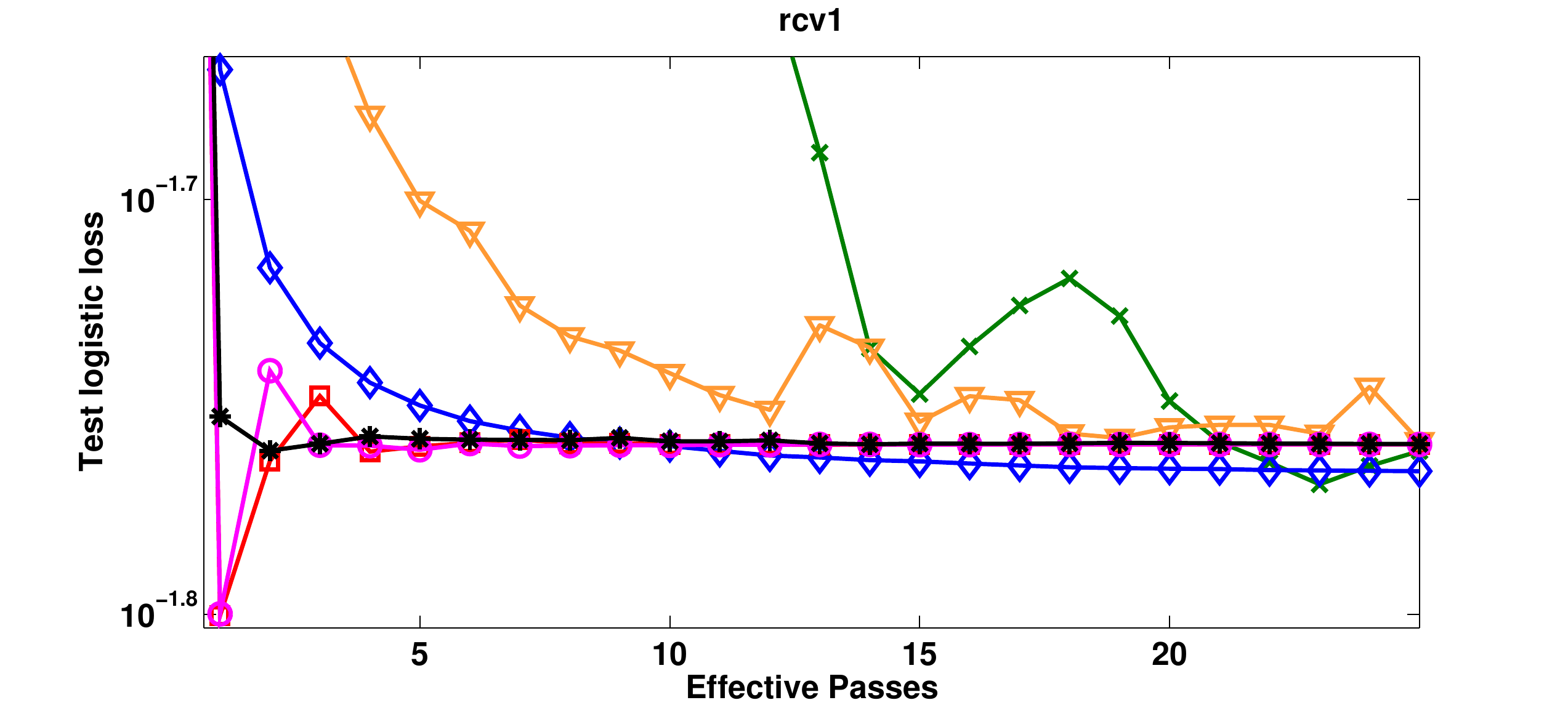}
\includegraphics[width = 1.81in]{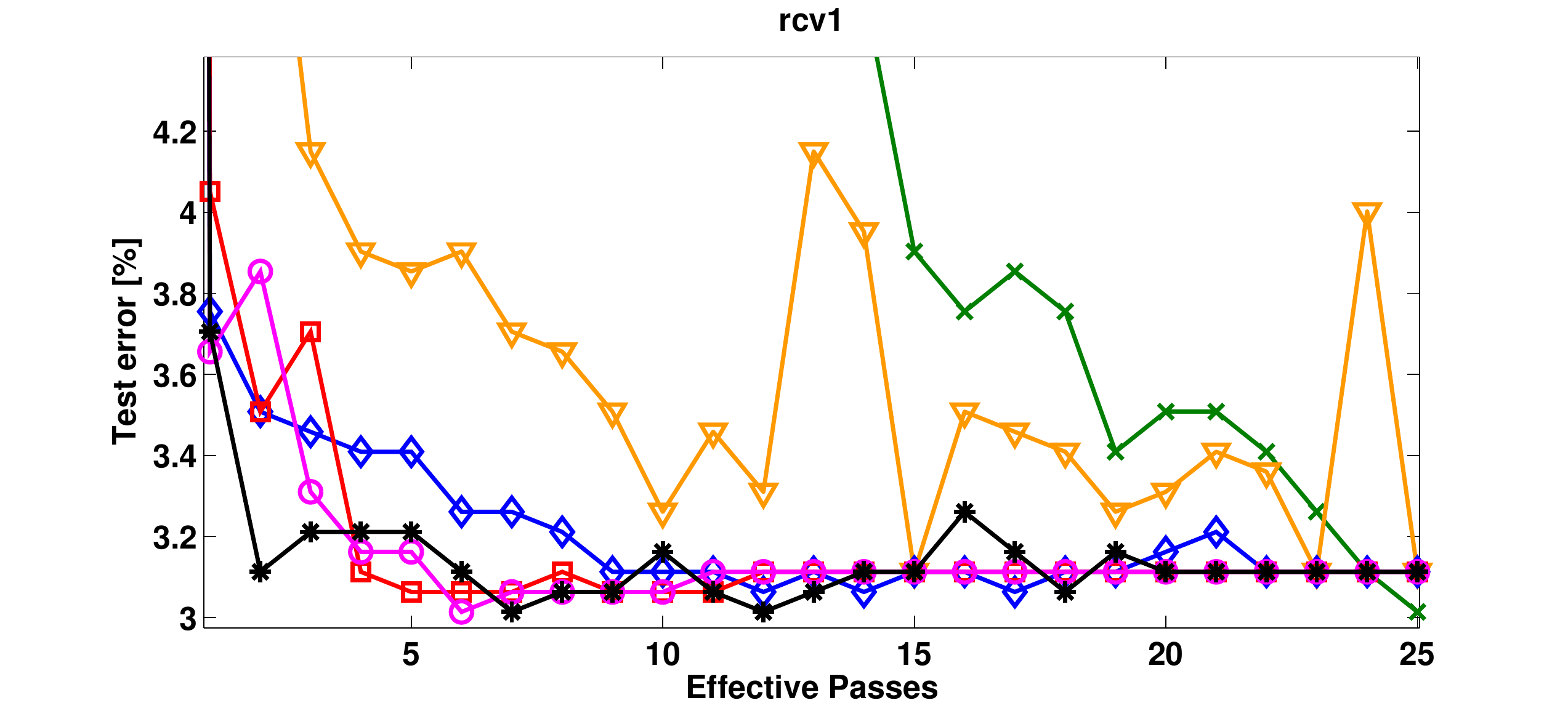}\\
\includegraphics[width = 1.81in]{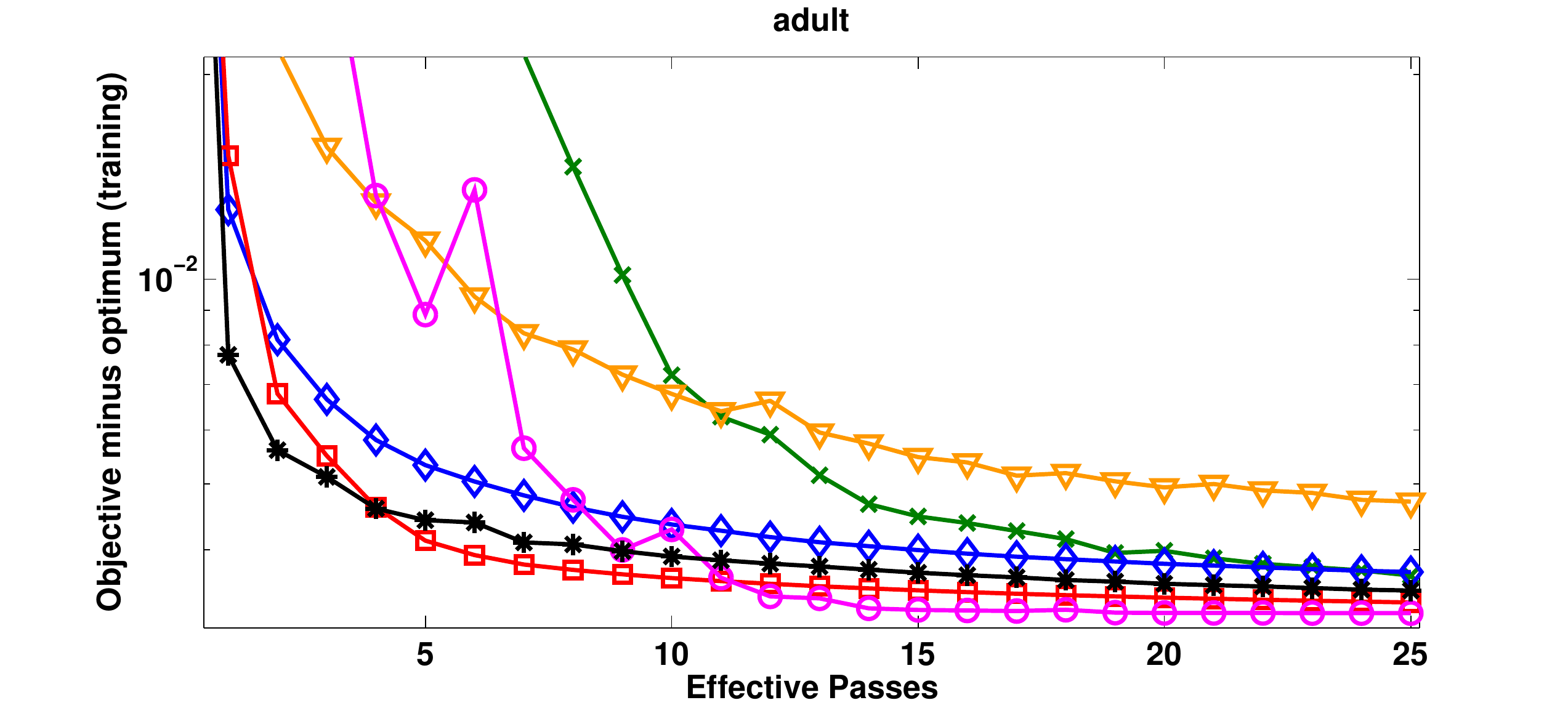}
\includegraphics[width = 1.81in]{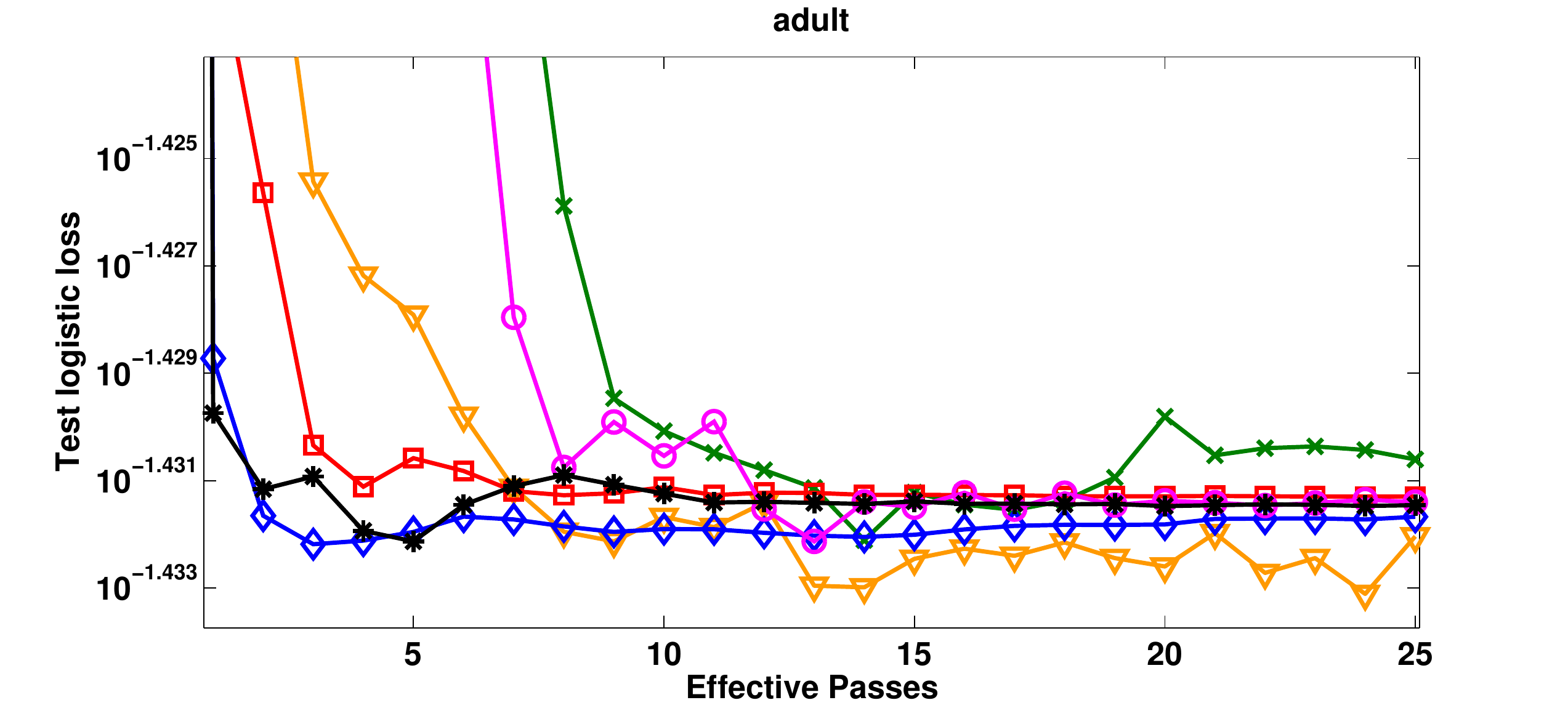}
\includegraphics[width = 1.81in]{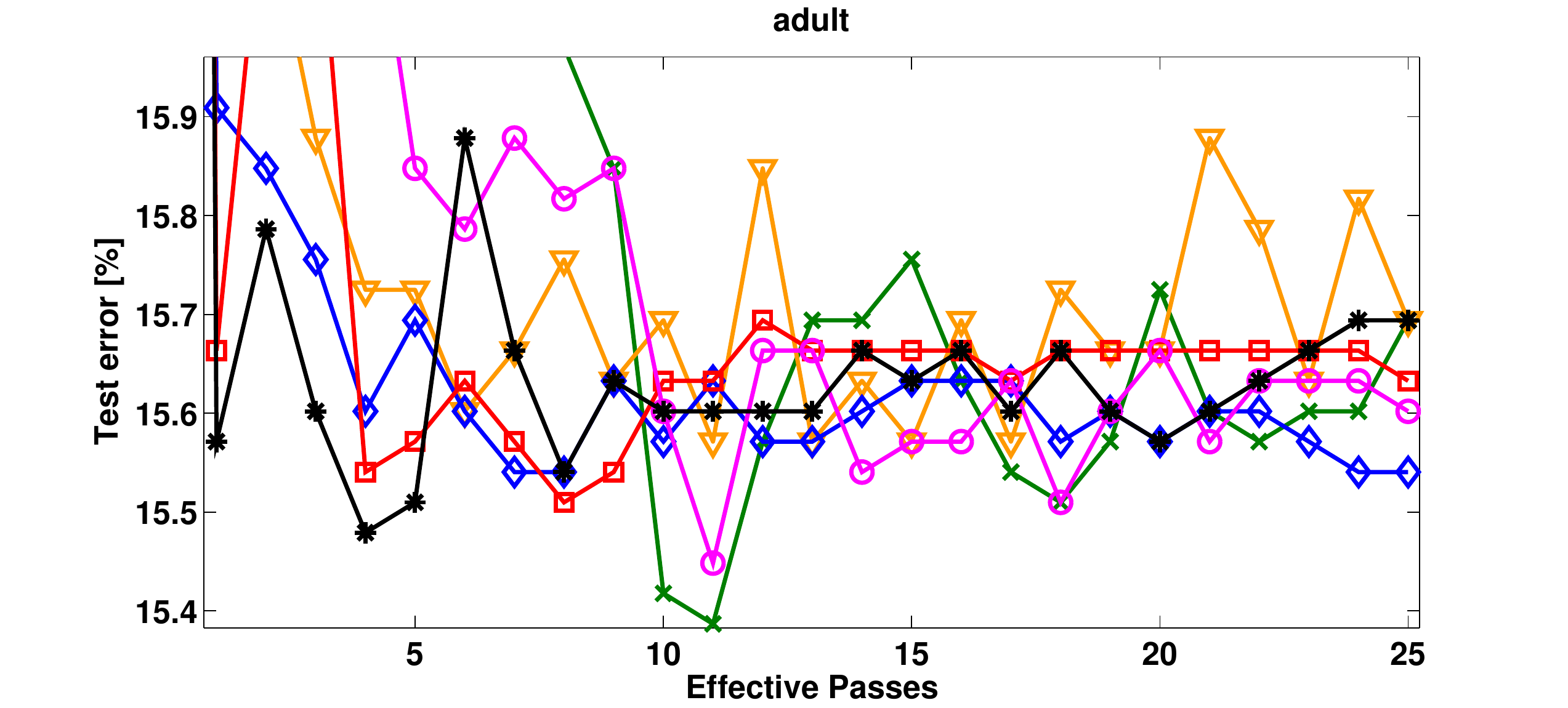}\\
\includegraphics[width = 1.81in]{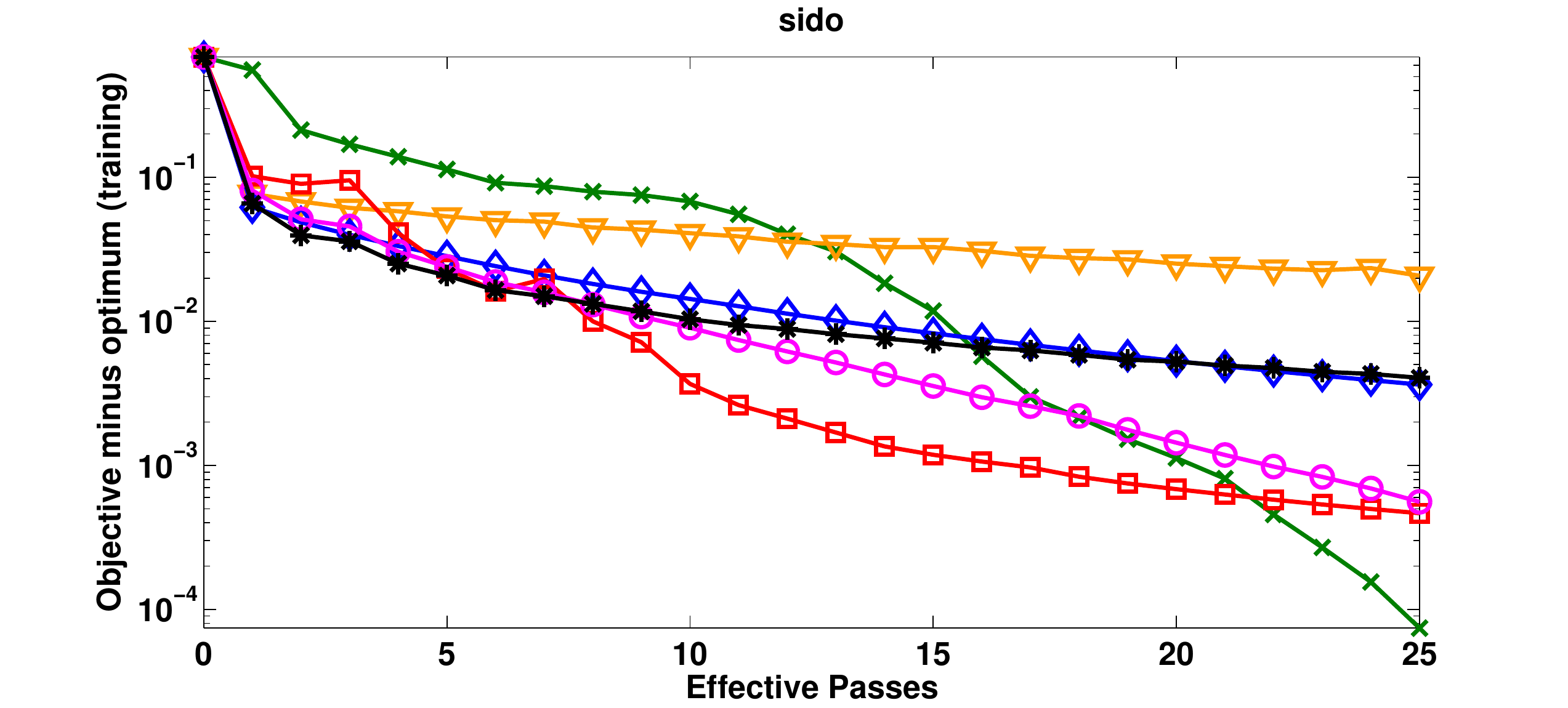}
\includegraphics[width = 1.81in]{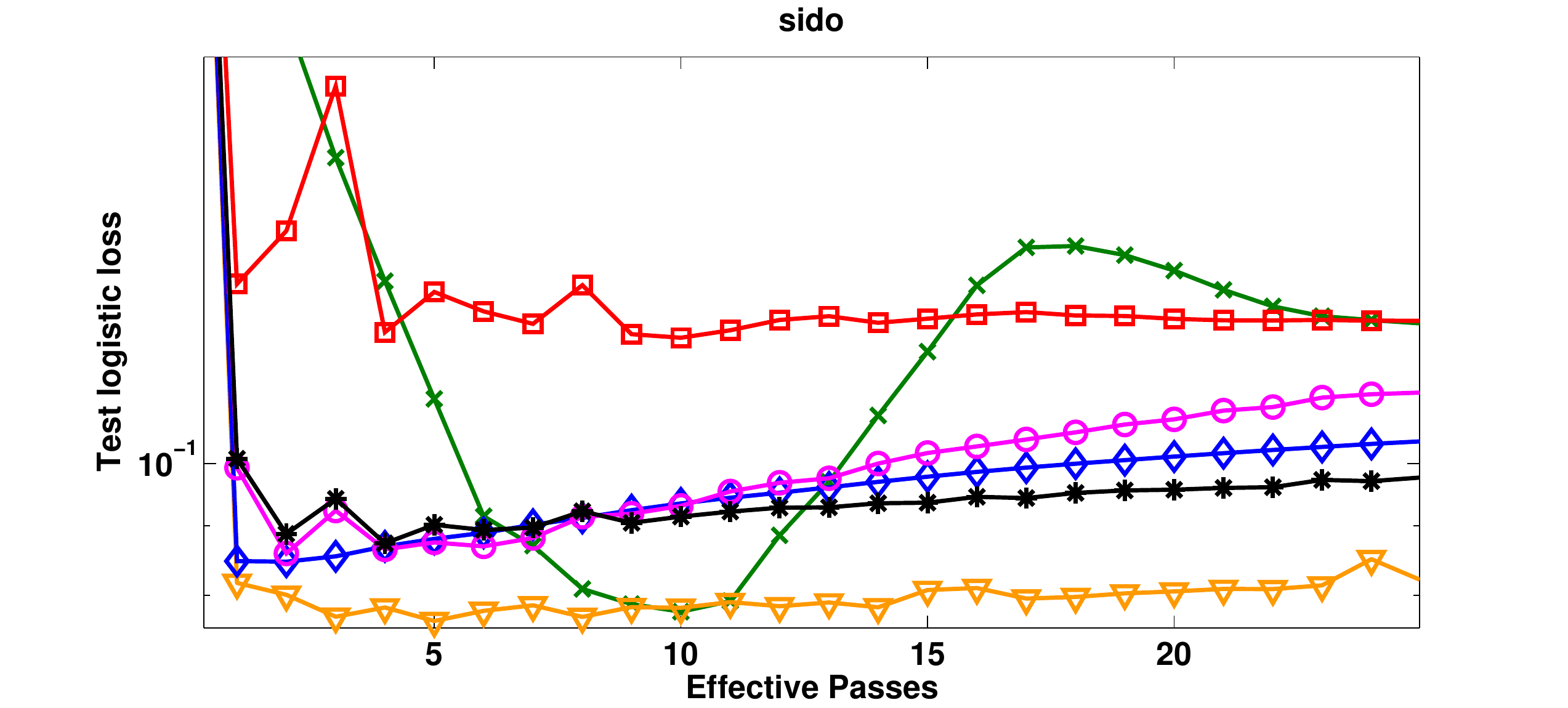}
\includegraphics[width = 1.81in]{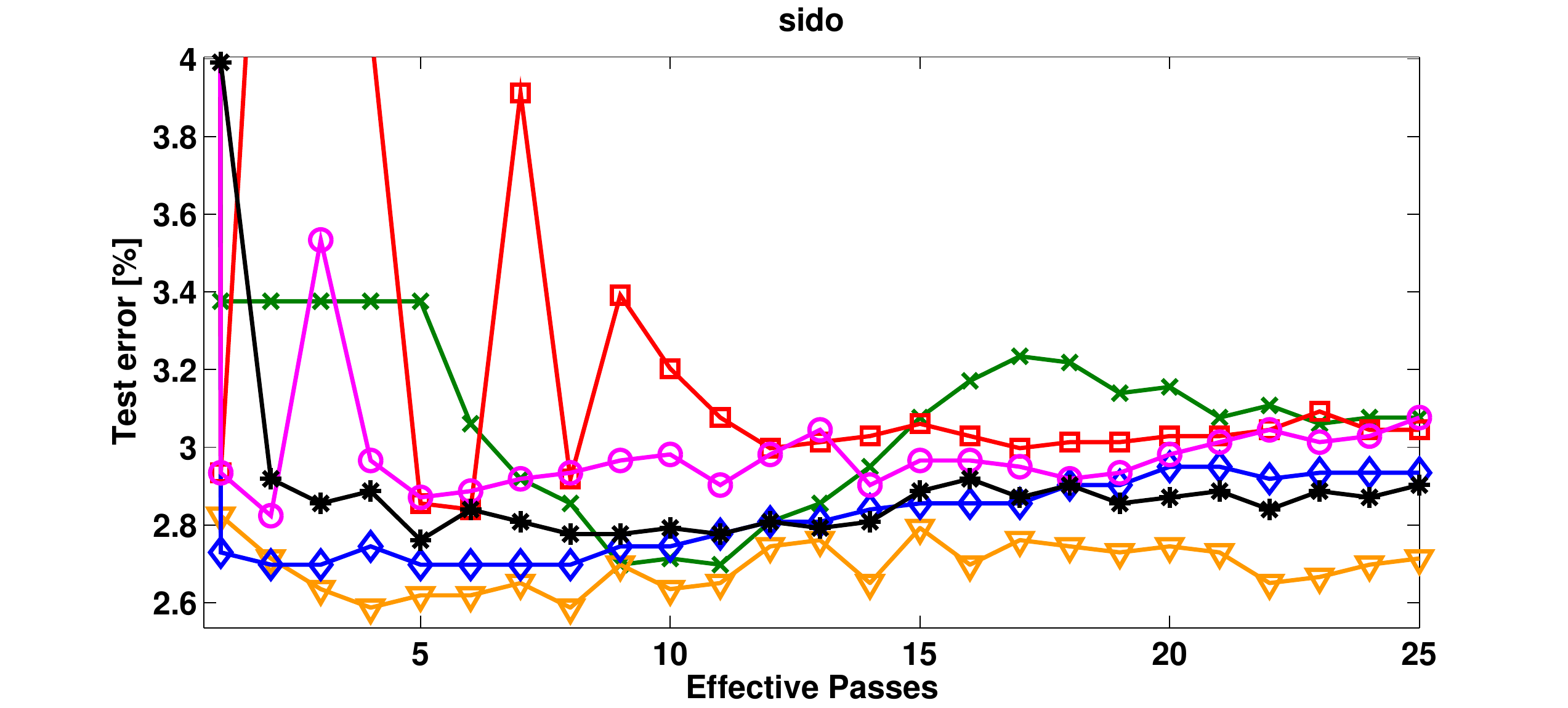}\\
\includegraphics[width = 1.81in]{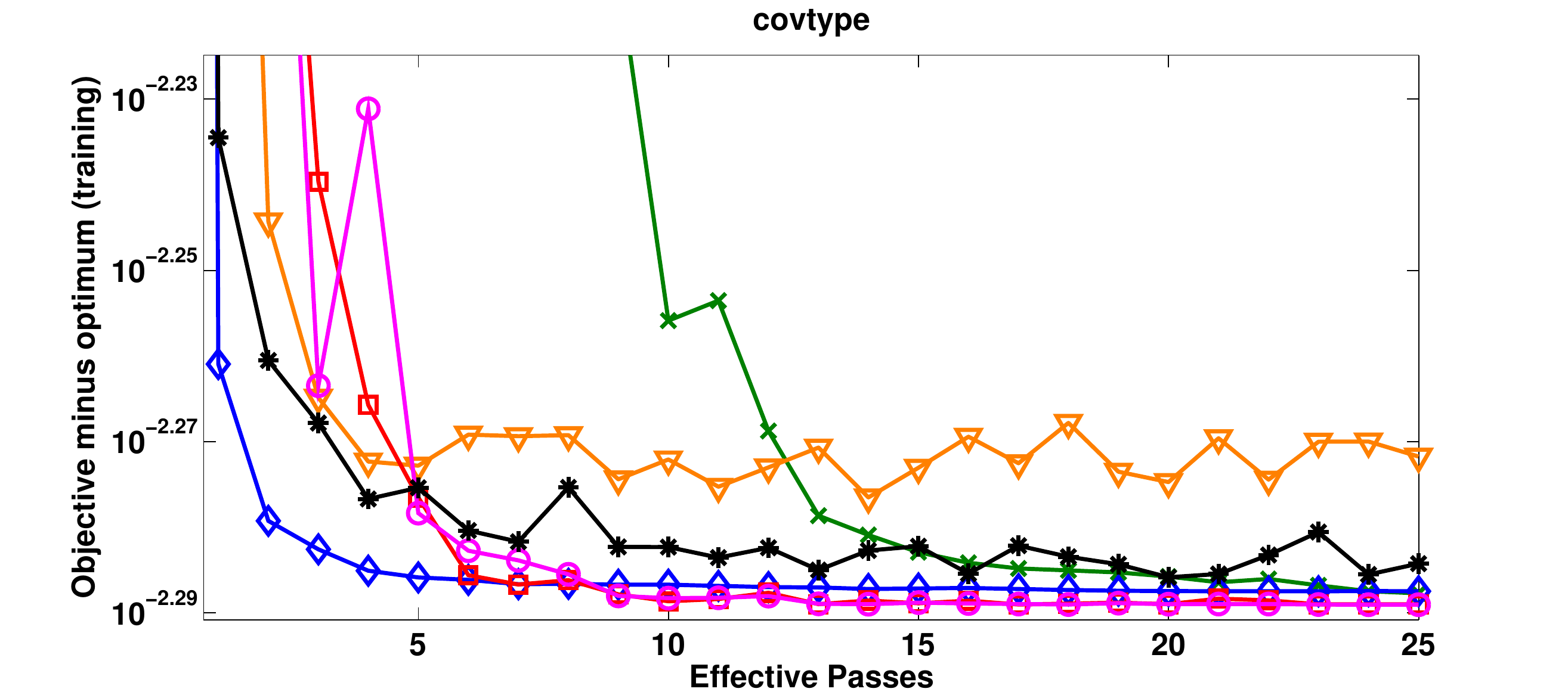}
\includegraphics[width = 1.81in]{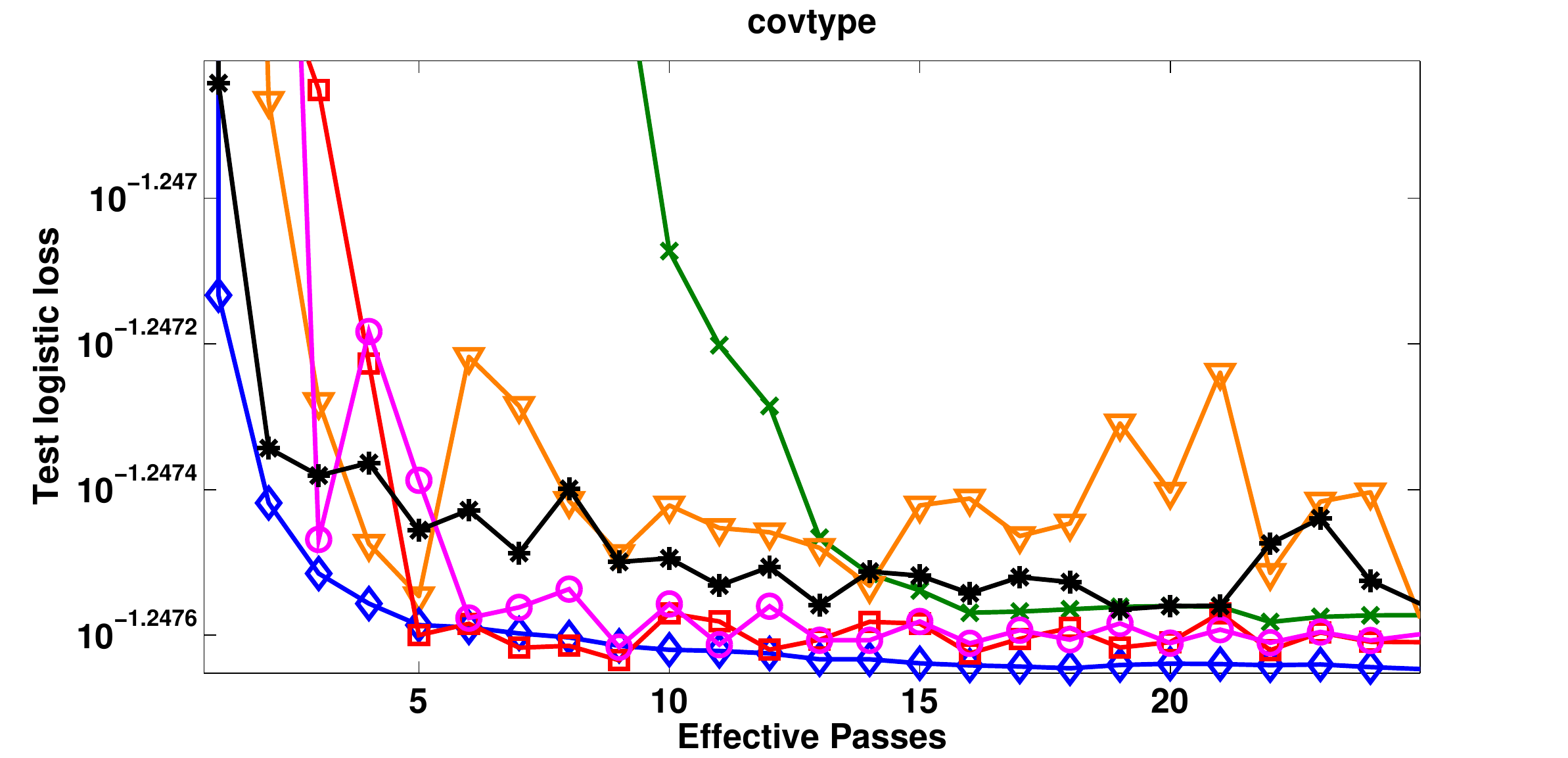}
\includegraphics[width = 1.81in]{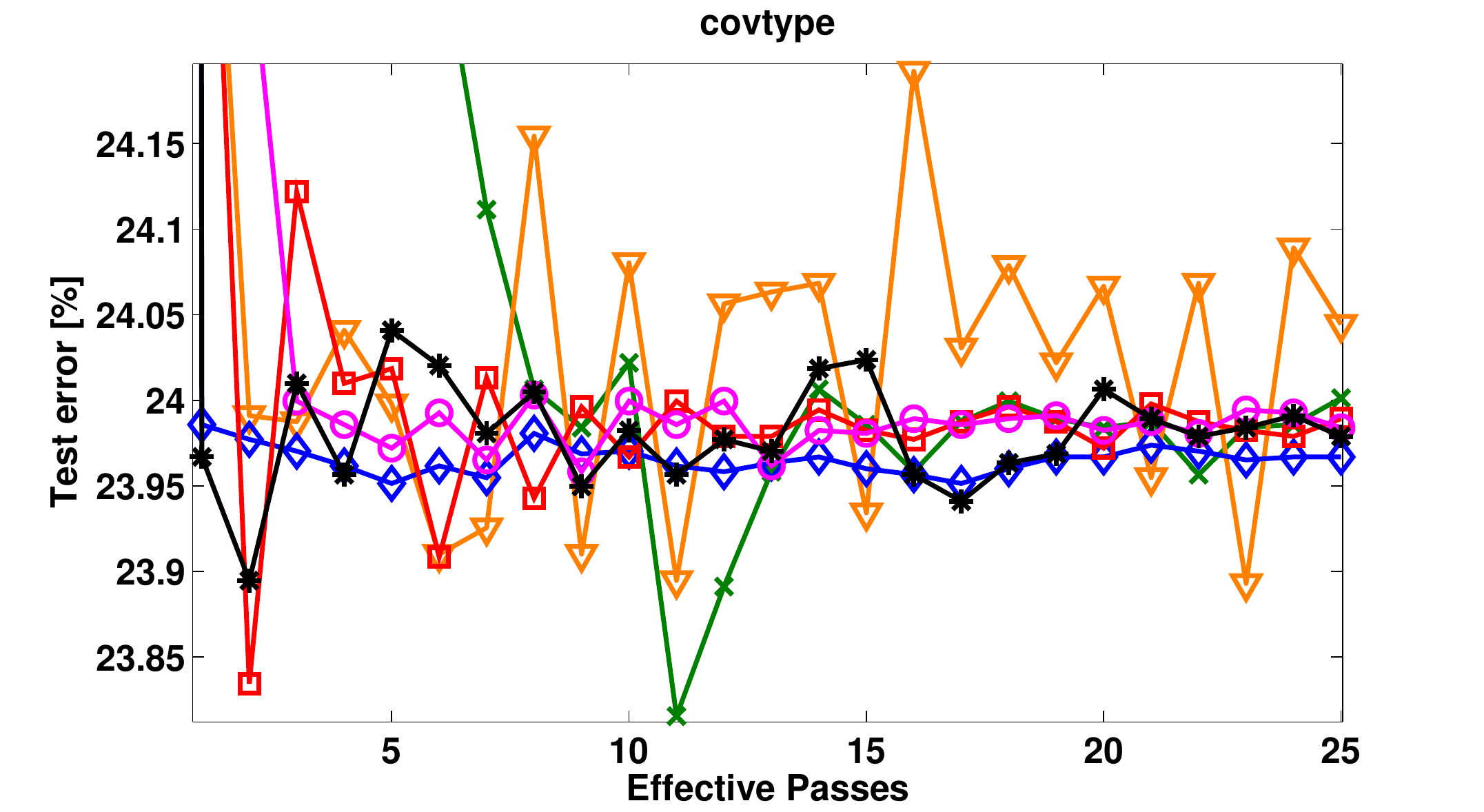}\\
\includegraphics[width = 1.81in]{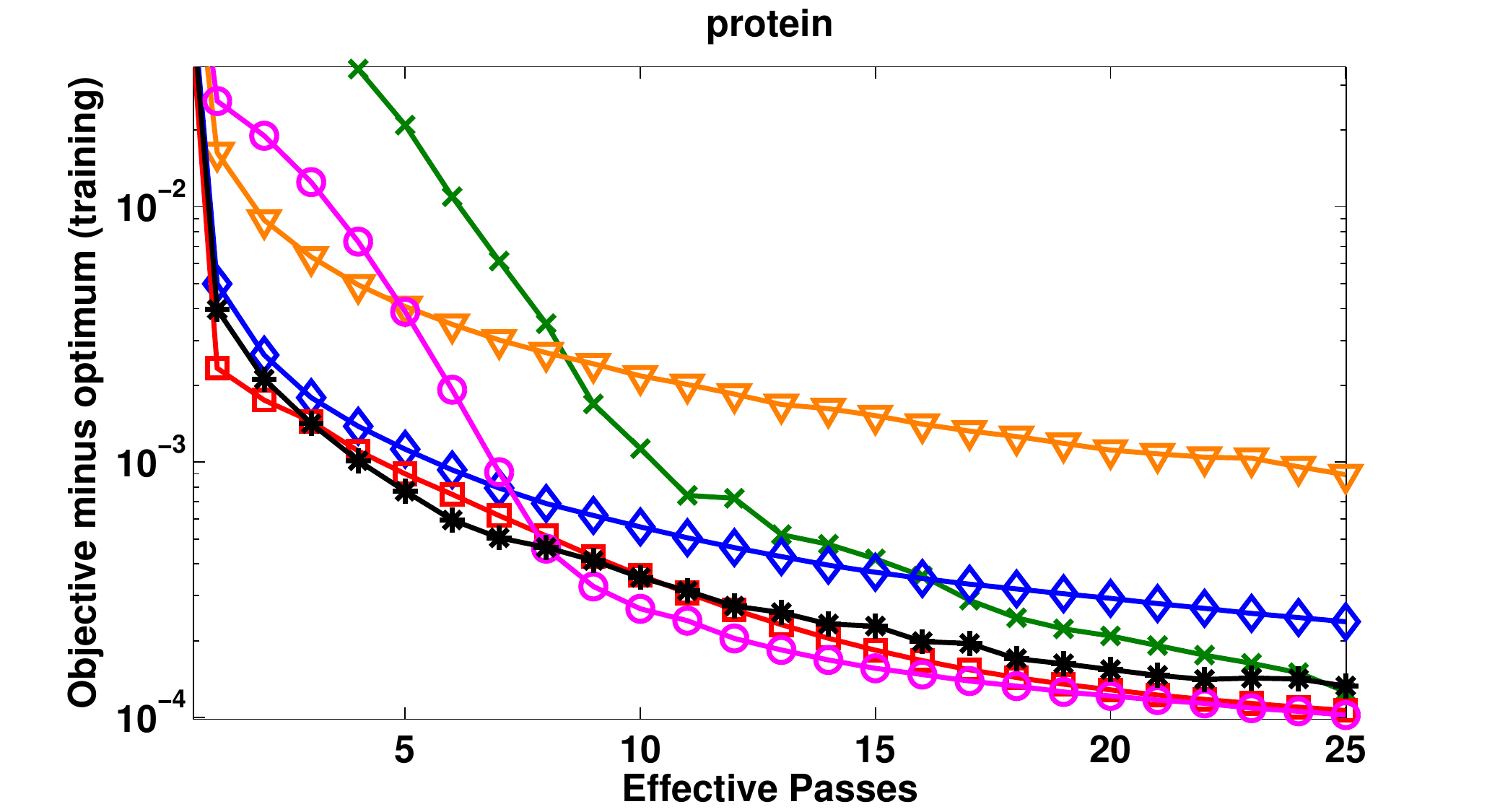}
\includegraphics[width = 1.81in]{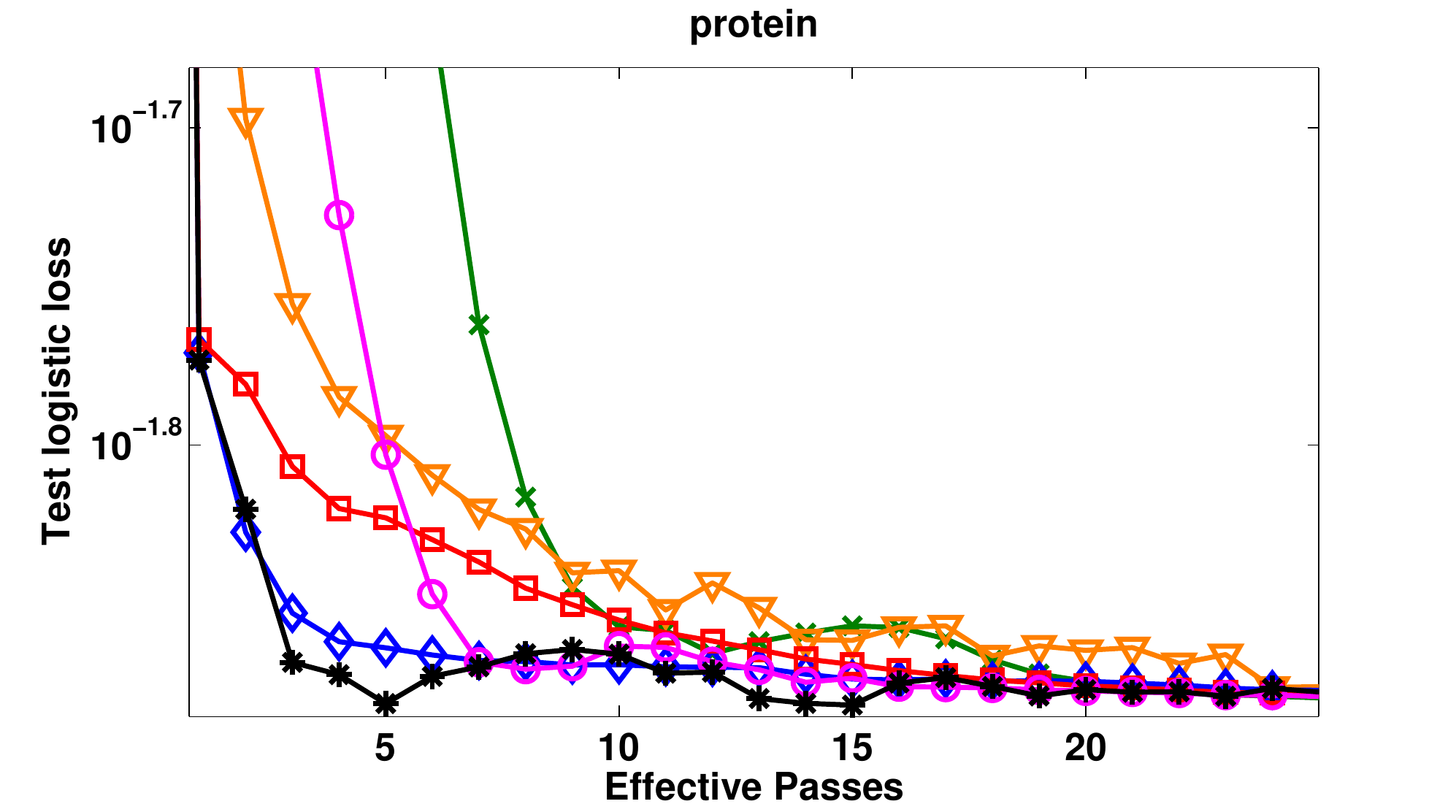}
\includegraphics[width = 1.81in]{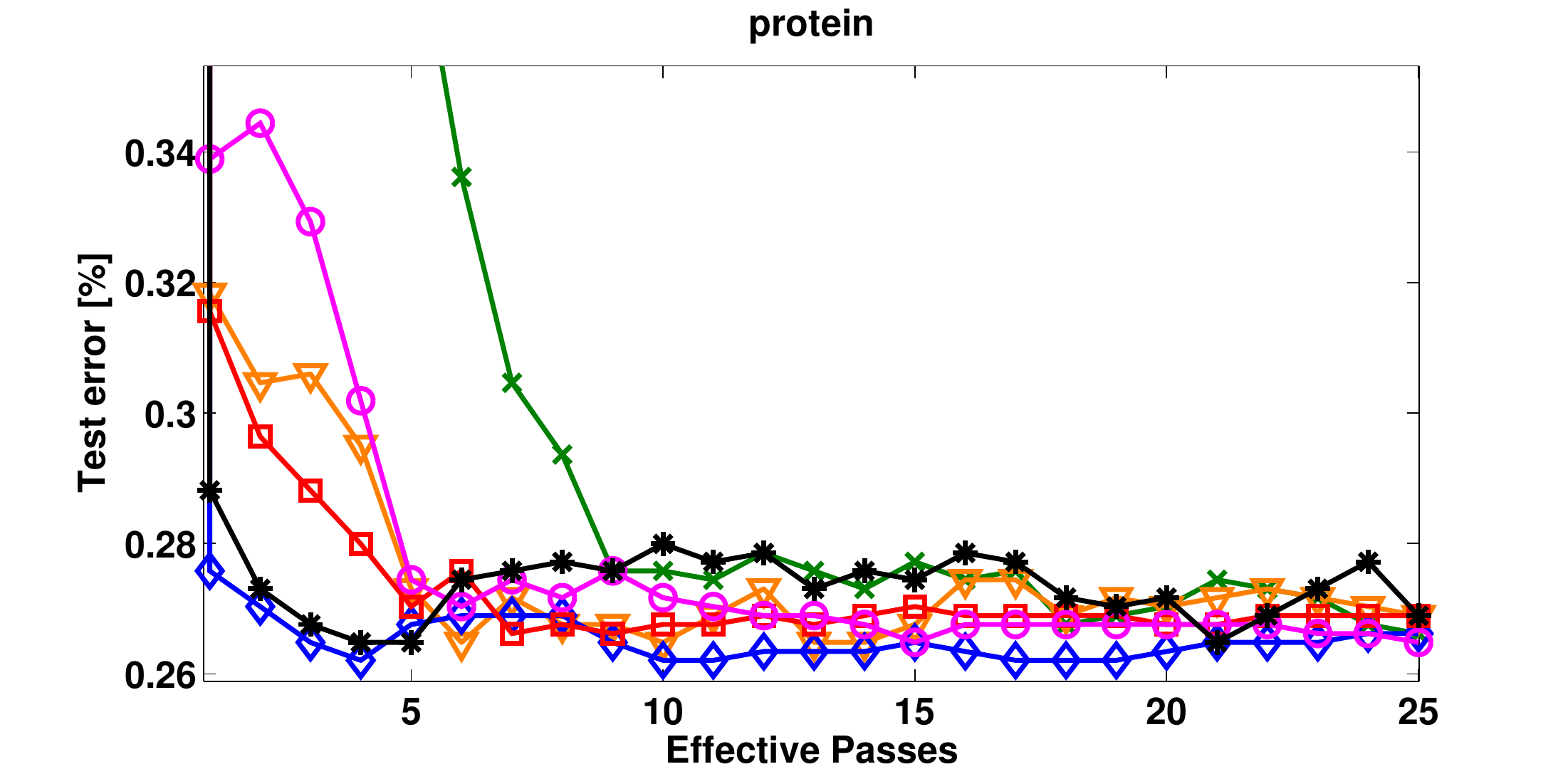}\\
\includegraphics[width = 1.81in]{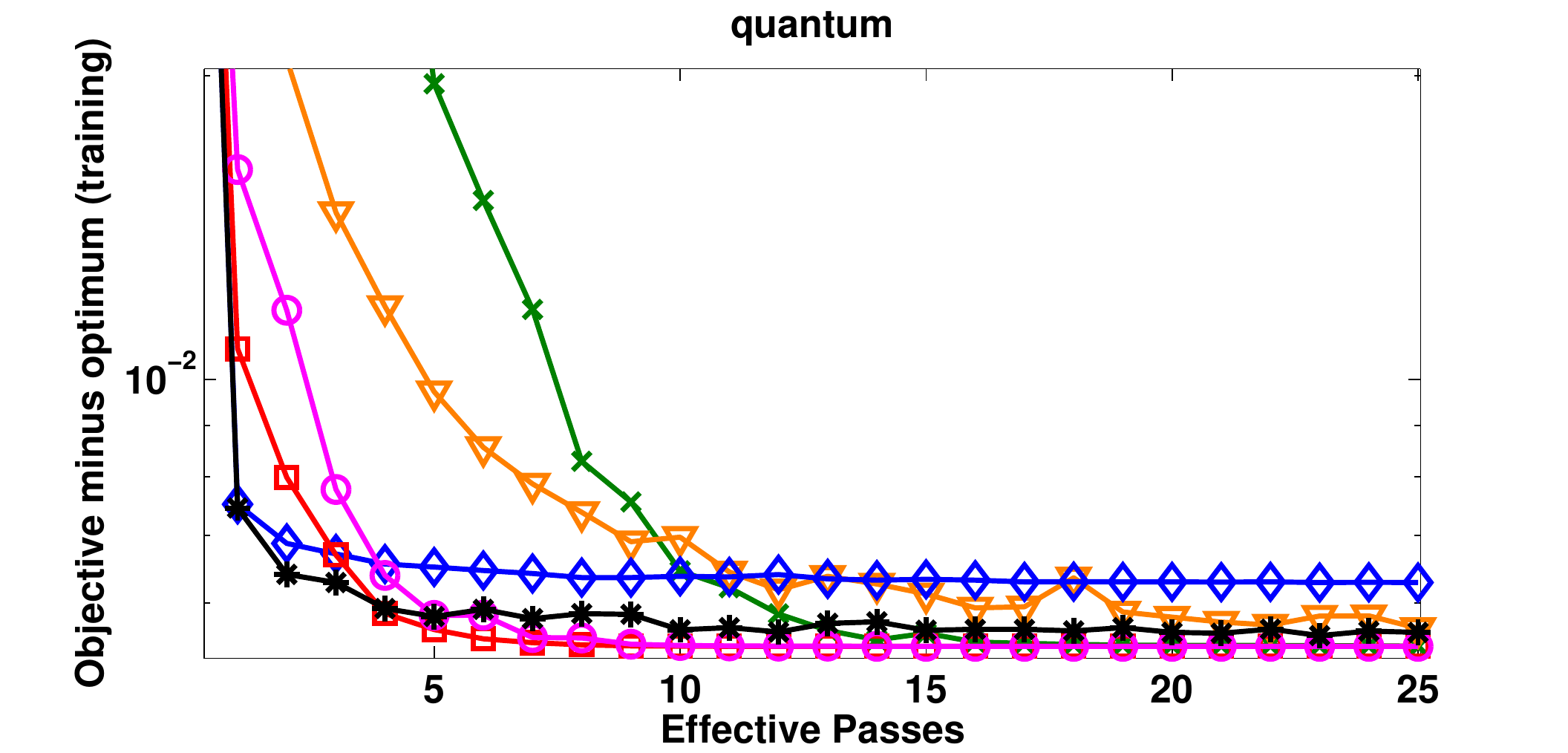}
\includegraphics[width = 1.81in]{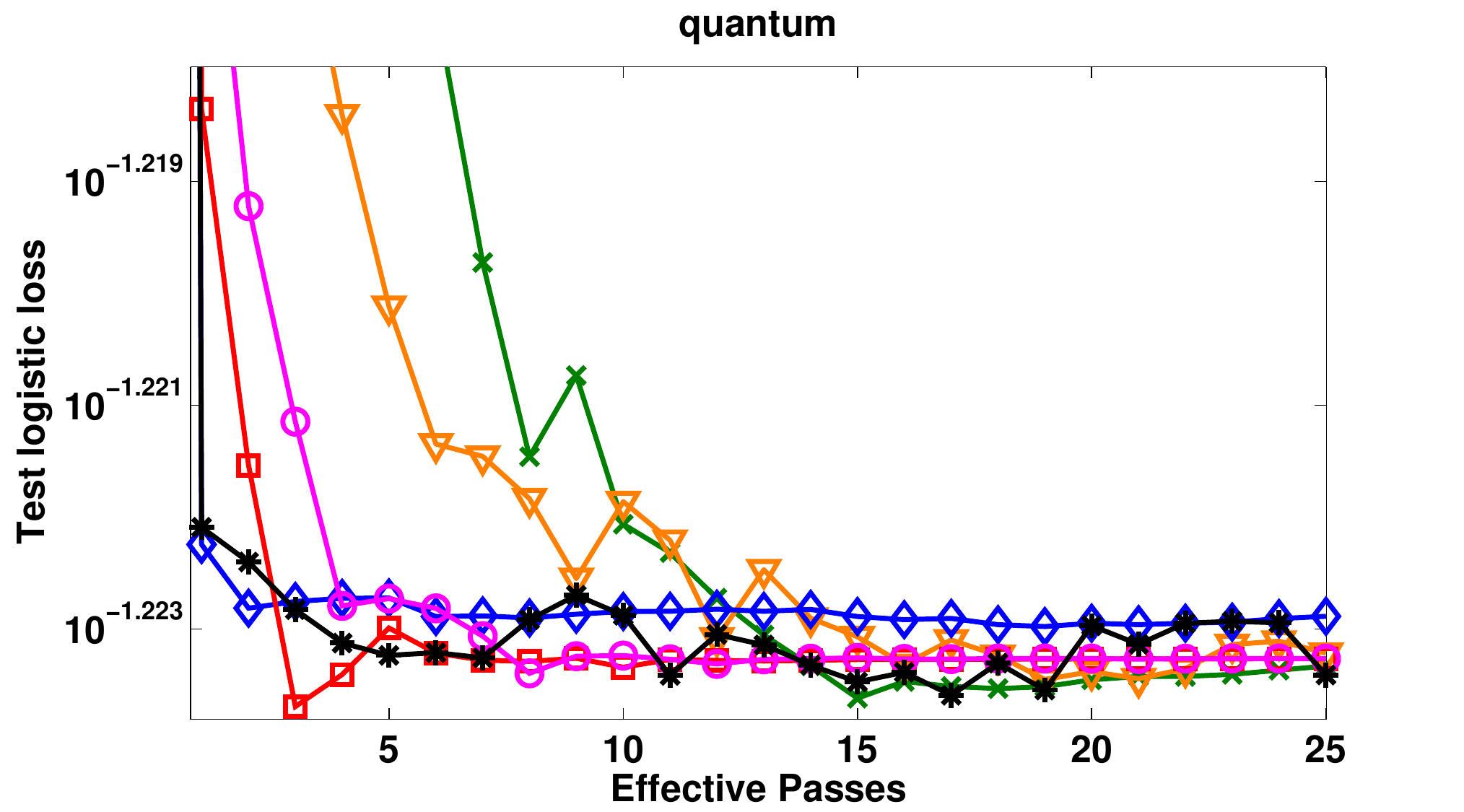}
\includegraphics[width = 1.81in]{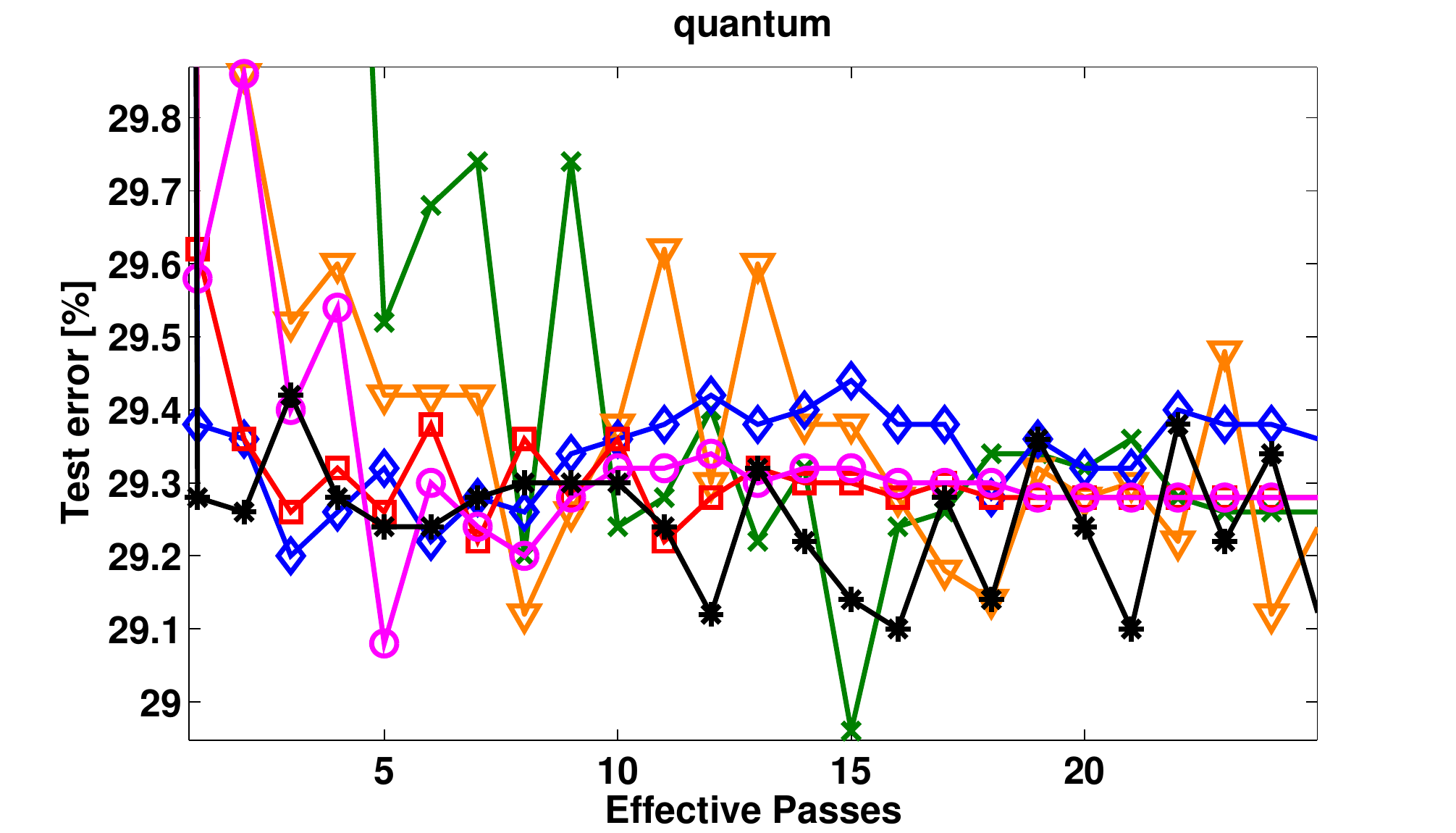}\\
\caption{Comparison of optimization strategies for $l_2$-regularized logistic regression. From left two right: training excess cost, testing cost and testing error. From top to bottom: \textit{rcv1} ($\alpha_{\text{SGD}} = 10^{-1}$, $\alpha_{\text{ASGD}} = 1$, $\alpha_{\text{SQB}} = 10^{-1}$), \textit{adult} ($\alpha_{\text{SGD}} = 10^{-3}$, $\alpha_{\text{ASGD}} = 10^{-2}$, $\alpha_{\text{SQB}} = 1$), \textit{sido} ($\alpha_{\text{SGD}} = 10^{-3}$, $\alpha_{\text{ASGD}} = 10^{-2}$, $\alpha_{\text{SQB}} = 1$), \textit{covtype} ($\alpha_{\text{SGD}} = 10^{-4}$, $\alpha_{\text{ASGD}} = 10^{-3}$, $\alpha_{\text{SQB}} = 10^{-1}$), \textit{protein} ($\alpha_{\text{SGD}} = 10^{-3}$, $\alpha_{\text{ASGD}} = 10^{-2}$, $\alpha_{\text{SQB}} = 1$) and \textit{quantum} ($\alpha_{\text{SGD}} = 10^{-4}$, $\alpha_{\text{ASGD}} = 10^{-2}$, $\alpha_{\text{SQB}} = 10^{-1}$) datasets. This figure is best viewed in color.} 
\label{fig:regression}
\end{figure}

We performed experiments with $l_2$-regularized logistic regression on binary classification task with regularization parameter $\eta = \frac{1}{T}$. We report the results for six datasets.
The first three are sparse: \textit{rcv1} ($T = 20242$, $d = 47236$; SQB parameters: $l = 5$, $\gamma_{\bmu} = 0.005$, $\gamma_{\bSigma} = 0.0003$), \textit{adult} ($T = 32561$, $d = 123$; SQB parameters: $l = 5$, $\gamma_{\bmu} = 0.05$, $\gamma_{\bSigma} = 0.001$) and \textit{sido} ($T = 12678$, $d = 4932$; SQB parameters: $l = 5$, $\gamma_{\bmu} = 0.01$, $\gamma_{\bSigma} = 0.0008$). The remaining datasets are dense: \textit{covtype} ($T = 581012$, $d = 54$; SQB parameters: $l = 10$, $\gamma_{\bmu} = 0.0005$, $\gamma_{\bSigma} = 0.0003$), \textit{protein} ($T = 145751$, $d = 74$; SQB parameters: $l = 20$, $\gamma_{\bmu} = 0.005$, $\gamma_{\bSigma} = 0.001$) and \textit{quantum} ($T = 50000$, $d = 78$; SQB parameters:  $l = 5$, $\gamma_{\bmu} = 0.001$, $\gamma_{\bSigma} = 0.0008$). Each dataset was split to training and testing datasets such that $90\%$ of the original datasets was used for training and the remaining part for testing. Only \textit{sido} and \textit{protein} were split in half to training and testing datasets due to large disproportion of the number of datapoints belonging to each class. The experimental results we obtained are shown in Figure~\ref{fig:regression}. We report the training and testing costs as well as the testing error as a function of the number of effective passes through the data and thus the results do not rely on the implementation details. We would like to emphasize however that under current implementation the average running time for the bound method across the datasets is comparable to that of the competitor methods. All codes are released and are publicly available at \textsf{\small{www.columbia.edu/~aec2163/NonFlash/Papers/Papers.html}}.

\section{Conclusions}
We have presented a new semistochastic quadratic bound (SQB) method, 
together with convergence theory and several numerical examples. 
The convergence theory is divided into two parts. First, we proved convergence to stationarity of the method under 
 weak hypotheses (in particular, convexity is not required). Second, for the logistic regression problem, we provided 
a stronger convergence theory, including a rate of convergence analysis.  

The main contribution of this paper is to  apply SQB methods in a semi-stochastic large-scale setting. 
In particular, we developed and analyzed a flexible framework that allows sample-based approximations of the bound from~\cite{JebCho12}
that are appropriate in the large-scale setting, computationally efficient, and competitive with state-of-the-art methods.

Future work includes developing a fully stochastic version of SQB, as well as applying it to learn mixture models and other latent models, 
as well as models that induce representations, in the context of deep learning.

\begin{spacing}{0.9}
\bibliographystyle{unsrt}
\small{
\bibliography{SemiBound}

\begin{thebibliography}{10}

\bibitem{JebCho12}
T.~Jebara and A.~Choromanska.
\newblock Majorization for {CRFs} and latent likelihoods.
\newblock In {\em NIPS}, 2012.

\bibitem{opac-b1104789}
Y.~Nesterov.
\newblock {\em Introductory lectures on convex optimization : a basic course}.
\newblock Applied optimization. Kluwer Academic Publ., Boston, Dordrecht,
  London, 2004.

\bibitem{Hestenes&amp;Stiefel:1952}
M.R. Hestenes and E.~Stiefel.
\newblock Methods of conjugate gradients for solving linear systems.
\newblock {\em Journal of Research of the National Bureau of Standards},
  49(6):409--436, 1952.

\bibitem{bottou-98x}
L.~Bottou.
\newblock Online algorithms and stochastic approximations.
\newblock In David Saad, editor, {\em Online Learning and Neural Networks}.
  Cambridge University Press, Cambridge, UK, 1998.

\bibitem{Littlestone:1988:LQI:639961.639994}
N.~Littlestone.
\newblock Learning quickly when irrelevant attributes abound: A new
  linear-threshold algorithm.
\newblock {\em Mach. Learn.}, 2(4):285--318, April 1988.

\bibitem{rosenblatt58a}
F.~Rosenblatt.
\newblock The perceptron: A probabilistic model for information storage and
  organization in the brain.
\newblock {\em Psychological Review}, 65(6):386--408, 1958.

\bibitem{citeulike:432261}
H.~Robbins and S.~Monro.
\newblock {A Stochastic Approximation Method}.
\newblock {\em The Annals of Mathematical Statistics}, 22(3):400--407, 1951.

\bibitem{DBLP:conf/nips/BottouC03}
L.~Bottou and Y.~LeCun.
\newblock Large scale online learning.
\newblock In {\em NIPS}, 2003.

\bibitem{DBLP:conf/nips/RouxSB12}
N.~Le Roux, M.~W. Schmidt, and F.~Bach.
\newblock A stochastic gradient method with an exponential convergence rate for
  finite training sets.
\newblock In {\em NIPS}, 2012.

\bibitem{DBLP:journals/mp/Nesterov09}
Y.~Nesterov.
\newblock Primal-dual subgradient methods for convex problems.
\newblock {\em Math. Program.}, 120(1):221--259, 2009.

\bibitem{Tseng:1998:IGM:588881.588930}
P.~Tseng.
\newblock An incremental gradient(-projection) method with momentum term and
  adaptive stepsize rule.
\newblock {\em SIAM J. on Optimization}, 8(2):506--531, February 1998.

\bibitem{Polyak:1992:ASA:131092.131098}
B.~T. Polyak and A.~B. Juditsky.
\newblock Acceleration of stochastic approximation by averaging.
\newblock {\em SIAM J. Control Optim.}, 30(4):838--855, July 1992.

\bibitem{DBLP:journals/corr/abs-1211-2717}
S.~Shalev-Shwartz and T.~Zhang.
\newblock Proximal stochastic dual coordinate ascent.
\newblock {\em CoRR}, abs/1211.2717, 2012.

\bibitem{DBLP:journals/corr/abs-1305-2581}
S.~Shalev-Shwartz and T.~Zhang.
\newblock Accelerated mini-batch stochastic dual coordinate ascent.
\newblock {\em CoRR}, abs/1305.2581, 2013.

\bibitem{NIPS2013_4937}
R.~Johnson and T.~Zhang.
\newblock Accelerating stochastic gradient descent using predictive variance
  reduction.
\newblock In C.J.C. Burges, L.~Bottou, M.~Welling, Z.~Ghahramani, and K.Q.
  Weinberger, editors, {\em NIPS}, pages 315--323. 2013.

\bibitem{NIPS2013_5034}
C.~Wang, X.~Chen, A.~Smola, and E.~Xing.
\newblock Variance reduction for stochastic gradient optimization.
\newblock In {\em NIPS}, pages 181--189. 2013.

\bibitem{DBLP:journals/corr/abs-1209-1873}
S.~Shalev-Shwartz and T.~Zhang.
\newblock Stochastic dual coordinate ascent methods for regularized loss
  minimization.
\newblock {\em CoRR}, abs/1209.1873, 2012.

\bibitem{DBLP:conf/icml/Mairal13}
J.~Mairal.
\newblock Optimization with first-order surrogate functions.
\newblock In {\em ICML (3)}, pages 783--791, 2013.

\bibitem{NIPS2013_5129}
J.~Mairal.
\newblock Stochastic majorization-minimization algorithms for large-scale
  optimization.
\newblock In {\em NIPS}, pages 2283--2291. 2013.

\bibitem{Schraudolph99localgain}
N.~N. Schraudolph.
\newblock Local gain adaptation in stochastic gradient descent.
\newblock In {\em ICANN}, 1999.

\bibitem{Schraudolph02fastcurvature}
N.~N. Schraudolph.
\newblock Fast curvature matrix-vector products for second-order gradient
  descent.
\newblock {\em Neural Computation}, 14:2002, 2002.

\bibitem{ShapiroWardi}
A.~Shaprio~Y. Wardi.
\newblock Convergence analysis of stochastic algorithms.
\newblock {\em Mathematics of Operations Research}, 21(3):615--628, 1996.

\bibitem{Shapiro00onthe}
Alexander Shapiro, Tito~Homem de~mello, and Pii S.
\newblock On the rate of convergence of optimal solutions of monte carlo
  approximations of stochastic programs.
\newblock {\em SIAM Journal on Optimization}, 11:70--86, 2000.

\bibitem{Kleywegt:2002:SAA:588882.588955}
A.~J. Kleywegt, A.~Shapiro, and T.~Homem-de Mello.
\newblock The sample average approximation method for stochastic discrete
  optimization.
\newblock {\em SIAM J. on Optimization}, 12(2):479--502, February 2002.

\bibitem{FriedlanderSchmidt2012}
M.~P. Friedlander and M.~Schmidt.
\newblock Hybrid deterministic-stochastic methods for data fitting.
\newblock {\em SIAM J. Scientific Computing}, 34(3), 2012.

\bibitem{Byrd:2012:SSS:2348131.2348140}
R.~H. Byrd, G.~M. Chin, J.~Nocedal, and Y.~Wu.
\newblock Sample size selection in optimization methods for machine learning.
\newblock {\em Math. Program.}, 134(1):127--155, August 2012.

\bibitem{AravkinFHV:2012}
A.~Aravkin, M.~P. Friedlander, F.~Herrmann, and T.~van Leeuwen.
\newblock Robust inversion, dimensionality reduction, and randomized sampling.
\newblock {\em Mathematical Programming}, 134(1):101--125, 2012.

\bibitem{Schraudolph07astochastic}
N.~N. Schraudolph, J.~Yu, and S.~G\"unter.
\newblock A stochastic quasi-newton method for online convex optimization.
\newblock In {\em AISTATS}, 2007.

\bibitem{lecun-98x}
Y.~{Le Cun}, L.~Bottou, G.~B. Orr, and K.-R. M{\"{u}}ller.
\newblock Efficient backprop.
\newblock In {\em Neural Networks, Tricks of the Trade}, Lecture Notes in
  Computer Science LNCS~1524. Springer Verlag, 1998.

\bibitem{Bordes:2009:SCQ:1577069.1755842}
A.~Bordes, L.~Bottou, and P.~Gallinari.
\newblock Sgd-qn: Careful quasi-newton stochastic gradient descent.
\newblock {\em J. Mach. Learn. Res.}, 10:1737--1754, December 2009.

\bibitem{DBLP:conf/icml/Martens10}
J.~Martens.
\newblock Deep learning via hessian-free optimization.
\newblock In {\em ICML}, 2010.

\bibitem{ByrdCNN11}
R.~H. Byrd, G.~M. Chin, W.~Neveitt, and J.~Nocedal.
\newblock On the use of stochastic hessian information in optimization methods
  for machine learning.
\newblock {\em SIAM Journal on Optimization}, 21(3):977--995, 2011.

\bibitem{DBLP:conf/icml/LeNCLPN11}
Q.~V. Le, J.~Ngiam, A.~Coates, A.~Lahiri, B.~Prochnow, and A.~Y. Ng.
\newblock On optimization methods for deep learning.
\newblock In {\em ICML}, 2011.

\bibitem{journals/tit/AgarwalBRW12}
A.~Agarwal, P.~L. Bartlett, P.~D. Ravikumar, and M.~J. Wainwright.
\newblock Information-theoretic lower bounds on the oracle complexity of
  stochastic convex optimization.
\newblock {\em IEEE Transactions on Information Theory}, (5):3235--3249.

\bibitem{Bertsekas99gradientconvergence}
D.~P. Bertsekas and J.~N. Tsitsiklis.
\newblock Gradient convergence in gradient methods with errors.
\newblock {\em SIAM J. on Optimization}, 10(3):627--642, July 1999.

\bibitem{Martens2012}
James Martens and Ilya Sutskever.
\newblock Training deep and recurrent networks with hessian-free optimization.
\newblock In Grégoire Montavon, GenevièveB. Orr, and Klaus-Robert Müller,
  editors, {\em Neural Networks: Tricks of the Trade}, volume 7700 of {\em
  Lecture Notes in Computer Science}, pages 479--535. Springer Berlin
  Heidelberg, 2012.

\bibitem{vogel}
Curtis~R. Vogel.
\newblock {\em Computational Methods for Inverse Problems}.
\newblock Society for Industrial and Applied Mathematics, 2002.

\bibitem{Nemirovski}
A.~Nemirovski, A.~Juditsky, G.~Lan, and A.~Shapiro.
\newblock Robust stochastic approximation approach to stochastic programming.
\newblock {\em SIAM J. on Optimization}, 19(4):1574--1609, January 2009.

\end{thebibliography}
}
\end{spacing}

\end{document}